\documentclass{article}


\PassOptionsToPackage{square,sort,comma,numbers}{natbib}

\usepackage[preprint]{}




\usepackage{amsmath,amsfonts,bm}









\def\eqref#1{equation~\ref{#1}}









\def\1{\bm{1}}




\def\rvn{{\mathbf{n}}}

\def\rvw{{\mathbf{w}}}
\def\rvx{{\mathbf{x}}}
\def\rvy{{\mathbf{y}}}
\def\rvz{{\mathbf{z}}}


\def\rmA{{\mathbf{A}}}

\def\rmI{{\mathbf{I}}}





\DeclareMathAlphabet{\mathsfit}{\encodingdefault}{\sfdefault}{m}{sl}
\SetMathAlphabet{\mathsfit}{bold}{\encodingdefault}{\sfdefault}{bx}{n}











\newcommand{\R}{\mathbb{R}}

\newcommand{\KL}{D_{\mathrm{KL}}}
\newcommand{\TV}{D_{\mathrm{TV}}}

\newcommand{\bx}{\mathbf{x}}
\newcommand{\bX}{\mathbf{X}}
\newcommand{\bE}{\mathbb E}
\newcommand{\poly}{\mathrm{poly}}


\usepackage[utf8]{inputenc} 
\usepackage[T1]{fontenc}    
\usepackage{hyperref}       
\usepackage{url}            
\usepackage{booktabs}       
\usepackage{amsfonts}       
\usepackage{nicefrac}       
\usepackage{microtype}      
\usepackage{xcolor}         
\usepackage{graphicx}
\usepackage{tabularx}
\usepackage{hyperref}
\usepackage{amsmath}
\usepackage{amssymb}
\usepackage{mathtools}
\usepackage{amsthm}
\usepackage{caption}
\usepackage{subcaption}
\usepackage{adjustbox}
\usepackage{multirow}
\usepackage{siunitx}
\usepackage{wrapfig}
\usepackage{lipsum}
\usepackage{times}
\usepackage{epsfig}
\usepackage{makecell}
\usepackage{multirow}
\usepackage{algorithm}
\usepackage{algorithmic}
\usepackage[capitalize,noabbrev]{cleveref}
\usepackage[textsize=tiny]{todonotes}
\usepackage[toc,page,header]{appendix}
\usepackage{minitoc}
\usepackage{titletoc}
\usepackage{color, colortbl}
\definecolor{gray}{gray}{0.9}

\theoremstyle{plain}
\newtheorem{theorem}{Theorem}[section]

\newtheorem{lemma}[theorem]{Lemma}

\theoremstyle{definition}

\newtheorem{assumption}[theorem]{Assumption}
\theoremstyle{remark}

\title{Think Twice Before You Act: Improving Inverse Problem Solving With MCMC}

%

\author{Yaxuan Zhu \\
    \small{UCLA}\\
    \small{yaxuanzhu@g.ucla.edu} \\
    \And Zehao Dou \\
    \small {Yale University} \\
    \small {zehao.dou@yale.edu} \\ 
    \And Haoxin Zheng \\
    \small{UCLA}\\
    \small{haoxinzheng@g.ucla.edu } \\
    \And Yasi Zhang \\
    \small {UCLA} \\
    \small{yasminzhang@g.ucla.edu} \\
    \And Ying Nian Wu \\
    \small{UCLA}\\
    \small{ywu@stat.ucla.edu}\\ 
    \And Ruiqi Gao\\
    \small{Google DeepMind}\\
    \small{ruiqig@google.com}\\
}

\begin{document}

\maketitle

\begin{abstract}

Recent studies demonstrate that diffusion models can serve as a strong prior for solving inverse problems. A prominent example is Diffusion Posterior Sampling (DPS), which approximates the posterior distribution of data given the measure using Tweedie's formula. Despite the merits of being versatile in solving various inverse problems without re-training, the performance of DPS is hindered by the fact that this posterior approximation can be inaccurate especially for high noise levels.
Therefore, we propose \textbf{D}iffusion \textbf{P}osterior \textbf{MC}MC (\textbf{DPMC}), a novel inference algorithm based on Annealed MCMC to solve inverse problems with pretrained diffusion models. We define a series of intermediate distributions inspired by the approximated conditional distributions used by DPS. 
Through annealed MCMC sampling, we encourage the samples to follow each intermediate distribution more closely before moving to the next distribution at a lower noise level, and therefore reduce the accumulated error along the path.
We test our algorithm in various inverse problems, including super resolution, Gaussian deblurring, motion deblurring, inpainting, and phase retrieval. Our algorithm outperforms DPS with less number of evaluations across nearly all tasks, and is competitive among existing approaches. 
         
\end{abstract}

\section{Introduction}
Diffusion Models \cite{Sohl-DicksteinW15, DDPM_HoJA20, SongE19, Score_base_0011SKKEP21} have recently achieved significant success in high-dimensional data generation, including images \cite{DhariwalN21, ramesh2022hierarchical, rombach2022high, SahariaCSLWDGLA22}, videos \cite{HoSGC0F22, sorareport, ho2022imagen, blattmann2023align, girdhar2023emu}, audio \cite{kong2020diffwave, chen2020wavegrad}, text \cite{LiTGLH22}, and and 3D generation \cite{zero123_LiuWHTZV23, PooleJBM23, wu2023reconfusion, shi2023mvdream, gao2024cat3d}. 
Beyond generation, recent works have applied diffusion models to solve inverse problems in a plug-and-play fashion without the need for fine-tuning \cite{JalalADPDT21, KadkhodaieS21, kawar2022denoising, Score_base_0011SKKEP21, ChoiKJGY21, ChungSRY22, DPS_ChungKMKY23, dou2023diffusion, mardani2023variational, SongVMK23, RoutRDCDS23, song2023solving, ChungY22, FengSRCBF23, ZhuZLCWTG23, Bridge_ChungKY23}, where the goal is to restore data $\rvx$ from degraded measurements $\rvy$. Among these, one line of works \cite{JalalADPDT21, DPS_ChungKMKY23, SongVMK23, RoutRDCDS23, song2023solving} proposes to modify the inference process of diffusion models with guidance that encourages samples to be consistent with the measure. 
Other  perspectives include variational inference \cite{mardani2023variational, FengSRCBF23, ZhuZLCWTG23}, Bayesian filtering \cite{dou2023diffusion}, and solving an inner loop optimization problem during sampling~\cite{Song20220011S0E22, song2023solving}. 

A typical challenge in this context is that the posterior distribution  $p(\rvx|\rvy) $ is only defined for clean samples $ \rvx $, yet during sampling, an estimation of $ p(\rvx_t|\rvy) $ is needed at each diffusion step $ t $. DPS \cite{DPS_ChungKMKY23} approximates the intractable posterior through Tweedie's formula \cite{efron2011tweedie}, enabling its application in general inverse problems. However, this approximation might be inaccurate especially on high noise levels, leading to samples of low quality. To address this, $\Pi$GDM \cite{SongVMK23} attempts to improve the approximated guidance by pseudo-inverting the measurement. Additionally, \cite{RoutRDCDS23, song2023solving} aim to enhance DPS in the Latent Diffusion model \cite{rombach2022high} by introducing extra guidance terms or resampling processes to the data space.

In this work, we propose to leverage annealed Markov Chain Monte Carlo (MCMC) to improve solving inverse problems with diffusion models.
As a family of effective algorithms that draw samples from complex distributions, MCMC is widely used in training Energy-Based Models (EBMs) \cite{XieLZW16, NijkampHZW19, DuLTM21} and sampling from diffusion models \cite{SongE19, Score_base_0011SKKEP21}. Annealed MCMC further proposes to facilitate MCMC sampling by gradually sampling from a sequence of distributions from decreasingly lower temperature, to accelerate the mixing.  
Previous work \cite{DuDSTDFSDG23} has leveraged Annealed MCMC in compositional generation with EBMs or diffusion models. In this work, we propose using MCMC to reduce the error of posterior approximation in solving inverse problems.


There are some earlier works \cite{JalalADPDT21, Score_base_0011SKKEP21} that employ annealed MCMC with Langevin dynamics to solve inverse problems. They tackle the intractability of the posterior distribution by projecting the current sample onto the measurement subspace. However, these approaches might fail when measurements are noisy or the measurement process is non-linear, as discussed in~\cite{DPS_ChungKMKY23}. Moreover, a single MCMC step is executed at every noise level before moving to the next noise level.
Alternatively, we propose to build the the intermediate distributions of our annealed MCMC with the approximated posterior distributions derived in~\cite{DPS_ChungKMKY23}. Despite the fact that such approximation might be inaccurate for the data distributions defined by the forward diffusion, they still remain as a valid sequence of distributions that can be leveraged in annealed MCMC. We further propose to run multiple sampling steps at each noise level similar to \cite{DuDSTDFSDG23}, that empirically improves the performance.


In summary, we make the following contributions in this work:
\begin{itemize} 
    \item We propose the \textbf{D}iffusion \textbf{P}osterior \textbf{MC}MC (\textbf{DPMC}) algorithm, which leverages annealed MCMC with a sequence of posterior distribution of data given measurements, whose formula is inspired by DPS~\cite{DPS_ChungKMKY23}. 
    \item We demonstrate that empirically DPMC outperforms DPS in terms of sample quality across various types of inverse problems in image domain, including both linear and nonlinear inverse problems. DPMC also establishes competitive performance compared with other existing approaches.
    \item Through extensive ablation study and comparison with other approaches, we highlight the effectiveness of MCMC-based approaches for solving inverse problems, and a constant improvement of performance with increasingly larger number of sampling step. 
    
\end{itemize}

\section{Background}
\subsection{Inverse Problem}
\label{sec:inv_problem}
We denote data distribution $\rvx \sim p_{data}(\rvx)$. In many scientific applications, instead of directly observing $\rvx$, we might only have a partial measurement $\rvy$, which is derived from $\rvx$, and we want to restore $\rvx$ from $\rvy$. Formally, we might assume the following mapping between $\rvx$ and $\rvy$
\begin{equation}
    \rvy = \rmA(\rvx) + \rvn, \; \rvx \in \R^D, \; \rvy \in \R^d, \; \rvn \sim \mathcal{N}(0, \sigma^2 \rmI)
\end{equation}
where $\rmA(\cdot): \R^D \mapsto \R^d$ is called forward measurement operator and $\rvn$ is the measurement noise following Gaussian distribution. Therefore, we have $p(\rvy|\rvx) \sim  \mathcal{N}(\rmA(\rvx), \sigma^2 \rmI)$. Mapping between $\rvx$ to $\rvy$ can be many-to-one. This makes exactly restoring $\rvx$ become an ill-posed problem.

\subsection{Diffusion Models}
Diffusion models \citep{Sohl-DicksteinW15, DDPM_HoJA20, Song20220011S0E22, SongE19} define a generative process that gradually transforms a random noise distribution into a clean data distribution. A diffusion model consists of a forward noise injection process and a backward denoising process. Let $\rvx_0 \sim p_{data}$ denote clean observed samples. DDPM \cite{DDPM_HoJA20} defines a Markovian forward process as follows:
\begin{align}
     q(\rvx_{1:T} | \rvx_{t-1}) &= q(\rvx_{t} | \rvx_{t-1}) = \mathcal{N}( \sqrt{\alpha_{t}} \rvx_{t-1}, \beta_{t} \rmI) \nonumber \\
     q(\rvx_t| \rvx_0) &= \mathcal{N}(\sqrt{\bar{\alpha}_t} \rvx_0, (1 - \bar{\alpha}_t)\rmI) 
\end{align}
where $\{\beta_t\}_{t=1}^{T}$ is the manually designed noise schedule that might differ from different works \citep{DDPM_HoJA20,Score_base_0011SKKEP21, KarrasAAL22}. And $\alpha_t = 1 - \beta_t$, $\bar{\alpha}_t = \prod_{i=1}^{t}\alpha_i$. In the backward process, we start from the noise distribution and gradually denoise the samples as follows:
\[\rvx_{t-1} = \mu_\theta(\rvx_t, t) + \sqrt{\tilde{\beta}_t} \rvz_t, \;\; \rvz_t \sim \mathcal{N}(0, \rmI)\]
\begin{equation}
    \mu_\theta(\rvx_t, t) = \frac{1}{\sqrt{1 - \beta_t}} \left(\rvx_t - \frac{\beta_t}{\sqrt{1 - \bar{\alpha}_t}} \epsilon_\theta(\rvx_t, t)\right), \;\; \tilde{\beta}_t = \frac{1 - \bar{\alpha}_{t-1}}{1 - \bar{\alpha}_{t}} \beta_t 
\end{equation}
where $\epsilon_\theta(\rvx_t, t)$ is parameterized by a neural network \citep{DDPM_HoJA20, PeeblesX23}. Let the marginal distribution defined by the forward process of $\rvx_t$ be denoted as $p_t(\rvx_t)$. When trained with denoising score matching loss \citep{DDPM_HoJA20, Score_base_0011SKKEP21}, with sufficient data and model capacity, for almost all $\rvx$ and $t$, $\epsilon_\theta(\rvx_t, t)$ corresponds to the gradient field $\nabla_{\rvx_t} \log p_t(\rvx_t)$ as follows:
\begin{align}
    \nabla_{\rvx_t} \log p_t(\rvx_t) = -\frac{\epsilon_\theta(\rvx_t, t)}{\sqrt{1 - \bar{\alpha}_t}}
    \label{eq:score_eps}
\end{align}
Other than DDPM, DDIM\cite{DDIM_SongME21} defines a Non-Markovian forward process that shares the same training objective as DDPM. Thus, a model trained with DDPM can be directly applied in the DDIM sampler to accelerate the sampling process. A DDIM sampler follows:
\begin{align}
    \rvx_{t-1} = \sqrt{\bar{\alpha}_{t-1}} \left(\frac{\rvx_t - \sqrt{1 - \bar{\alpha}_{t}}\epsilon_\theta(\rvx_t, t)}{\sqrt{\bar{\alpha}_{t}}}\right) + \sqrt{1 - \bar{\alpha}_{t-1} - \sigma_t^2} \epsilon_\theta(\rvx_t, t) + \sigma_t \rvz_t
    \label{eq:diffusion_backward_ddim}
\end{align}
where $\rvz_t \sim \mathcal{N}(0, \rmI)$ and variance $\sigma_t$ can be arbitrary defined.

\subsection{Diffusion Posterior Sampling}
To solve the ill-posed inverse problem, Diffusion Posterior Sampling (DPS) \citep{DPS_ChungKMKY23} recruits pretrained diffusion models \citep{DDPM_HoJA20, Score_base_0011SKKEP21} as prior and propose an iterative optimization algorithm. According to Bayes rule, the gradient field of posterior distribution $\nabla_{\rvx_t} \log p(\rvx_t|y) = \nabla_{\rvx_t} \log p_t(\rvx_t) + \nabla_{\rvx_t} \log p(\rvy | \rvx_t)$ where the term $\nabla_{\rvx_t} \log p_t(\rvx_t)$ is estimated by the pretrained diffusion model using equation \ref{eq:score_eps}. In DPS, the authors adapt the approximation $p(\rvy|\rvx_t) \approx p(\rvy|\hat{\rvx}_0)$, where $\hat{\rvx}_0(\rvx_t) = E_{\rvx_0 \sim p(\rvx_0 | \rvx_t )}[\rvx_0]$ can be estimated through Tweedie's formula \cite{efron2011tweedie}:
\begin{align}
    \hat{\rvx}_0 = \frac{\rvx_t - \sqrt{1 - \bar{\alpha}_{t}}\epsilon_\theta(\rvx_t, t)}{\sqrt{\bar{\alpha}_{t}}}
    \label{eq:tweedie}
\end{align}

Given noisy observation data $x_t$, DPS makes the following updates
\begin{align}
    \rvx_{t-1}' &= \frac{1}{\sqrt{1 - \beta_t}} (\rvx_t - \frac{\beta_t}{\sqrt{1 - \bar{\alpha}_t}} \epsilon_\theta(\rvx_t, t)) + \sigma_t \rvz_t, \; \rvz_t \sim \mathcal{N}(0, \rmI) \nonumber \\
    &\rvx_{t-1} = \rvx_{t-1}' - \zeta_t\nabla_{\rvx_t} \| \rvy - \rmA(\hat{\rvx}_0) \|^2_2 
    \label{eq:dps_update}
\end{align}
In practice, DPS employs the following adaptive step size parameter $\zeta_t$:  
\begin{equation}
    \label{eq:dps_zeta}
    \zeta_t = \frac{\zeta}{\| \rvy - \rmA(\hat{\rvx}_0) \|_2}
\end{equation} 

where $\zeta$ is a fix constant.

\section{Diffusion Posterior with MCMC sampling} 
In this section, we introduce our DPMC model, which combines diffusion models and Markov Chain Monte Carlo (MCMC) sampling. The former one is for the progressive denoising to provide an unconditional proposal distribution. The latter is for the conditional guidance by the measurement $\rvy$. According to \textit{Theorem 1} in \cite{DPS_ChungKMKY23}, error of the approximation $ p(\rvy|\rvx_t) \approx p(\rvy|\hat{\rvx}_0)$ is bounded by the estimation error between $\rvx_0$ and $\hat{\rvx}_0$. When the noise level is low, $ p(\rvx_0 | \rvx_t) $ can be single-modal and the estimation $\hat{\rvx}_0$ and the posterior approximation might be accurate. However, at higher noise levels, where $ p(\rvx_0 | \rvx_t) $ can be indeed multi-modal and $\hat{\rvx}_0$ can be far from $\rvx_0$, this approximation can be too loose, potentially leading to inferior results. As illustrated in Figure \ref{fig:illustration_DPS}, while the samples of DPS are valid, they might fail to capture vivid details. To address these problems, we instead resort to MCMC sampling.
\label{sec:DPM}
\begin{figure}
    \centering 
    \includegraphics[width=.98\textwidth]{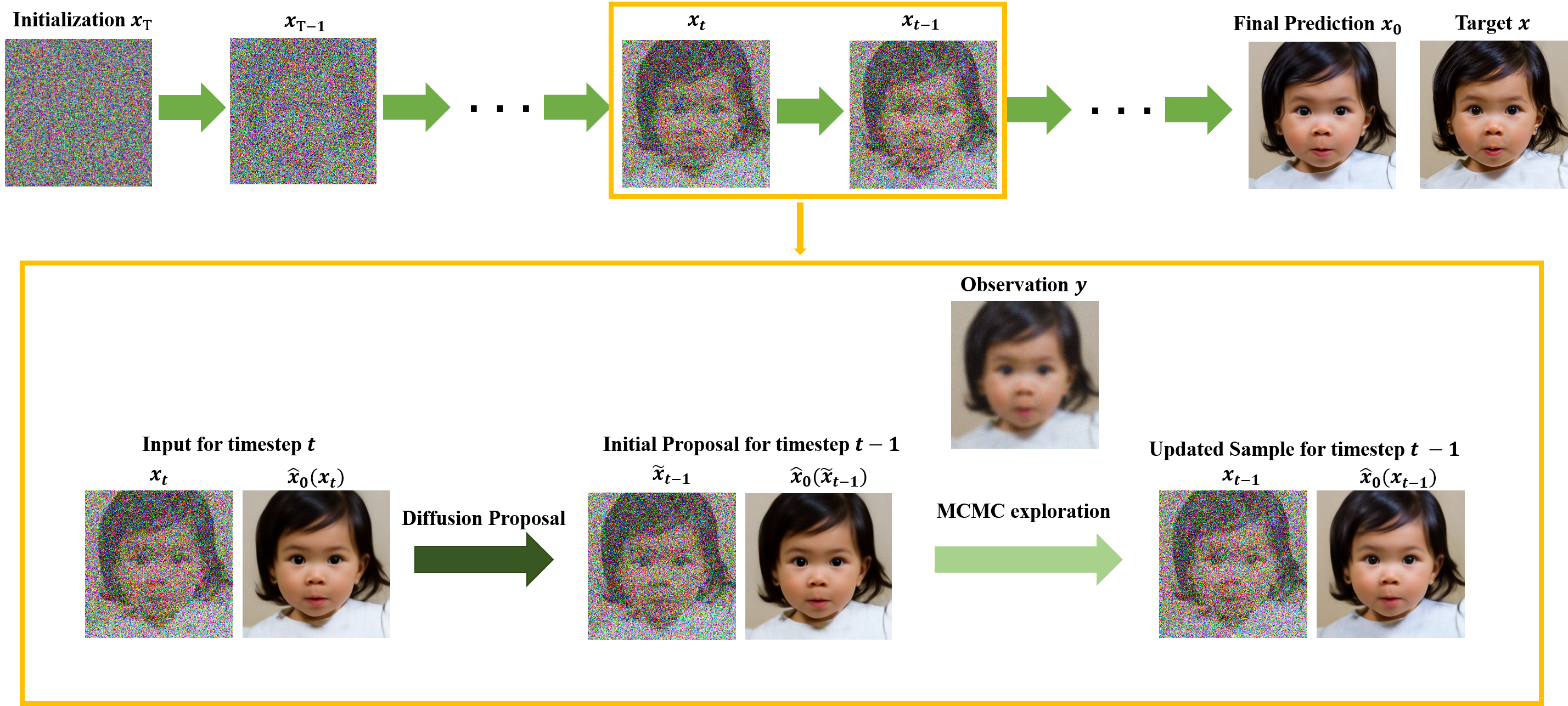}
    \caption{The illustration of DPMC. At each step, DPMC iterates over diffusion proposal step and MCMC exploration step.}
    \label{fig:diagram}
\end{figure}

Unlike diffusion models, which explicitly define all the noise distributions $ p_t(\rvx_t) $ through the forward process, MCMC sampling does not require knowledge of the sample distributions at each intermediate MCMC step. As long as a valid transition kernel and a sufficient number of MCMC sampling steps are used, MCMC is known to sample from arbitrary underlying distributions. 

\begin{wrapfigure}{r}{0.55\textwidth} 
\begin{minipage}{0.55\textwidth}
        \vspace{-2.0em}
        \begin{algorithm}[H]
        \caption{DPMC Algorithm}
        \label{alg:dpm}
        \begin{algorithmic}[1]
        \REQUIRE Inverse problem forward operator $\mathcal{A}(.)$, noisy measurement $\mathbf{y}$, pretrained diffusion prior model $p_t(\mathbf{x}_t)$, number of intermediate noise levels $T$, number of MCMC steps $K$, MCMC step size $\eta_t$, weighting parameter $\xi_t$.
        \STATE $\mathbf{x}_T \sim \mathcal{N}(0, \mathbf{I})$
        \FOR{$t = T$ to $1$}
            \STATE \textbf{Proposal step:} Sample $\tilde{\mathbf{x}}_{t-1}$ from $\mathbf{x}_t$ following Eq.~\ref{eq:diffusion_backward_ddim}
            \STATE \textbf{Exploration step:} Set $\mathbf{x}_{t-1}^{(0)} = \tilde{\mathbf{x}}_{t-1}$
            \FOR{$k = 1$ to $K$}
                \STATE Update $\mathbf{x}^{(k)}_{t-1}$ to $\mathbf{x}^{(k+1)}_{t-1}$ following Eq.~\ref{eq:Langevin}
            \ENDFOR
        \ENDFOR
    \end{algorithmic}
    \end{algorithm}
    \end{minipage}
\end{wrapfigure}


Annealed MCMC \cite{neal2001annealed, SongE19, Score_base_0011SKKEP21} is a widely used technique to accelerate the MCMC sampling process for highly multi-modal data. In the annealed sampling process, samples gradually progress through a series of intermediate distributions with different temperatures. MCMC is applied at each intermediate distribution to enable samples to transition from the previous distribution to the current one. In this work, we apply annealed MCMC to solve the inverse problem. We introduce a series of intermediate distributions $\{\tilde{p}_t(\rvx_t | \rvy)\}_{t=1}^{T}$. Note that we do not expect $\tilde{p}_t(\rvx_t | \rvy)$ to be close to the true posterior distribution $p_t(\rvx_t|\rvy)$ at every intermediate distribution. Instead, we only require $\tilde{p}_t(\rvx_t | \rvy)$ to agree with $p_t(\rvx_t|\rvy)$ at the clean image distribution. The key idea is that MCMC sampling can bridge the gap between different intermediate distributions. And the intermediate distributions only need to form a trajectory that enables MCMC to smoothly transition from the noise distribution to the target distribution in the clean data space.
  
Following \cite{DPS_ChungKMKY23}, we define the intermediate distributions $\tilde{p}_t(\rvx_t | y)$ as
\begin{align}
    \tilde{p}_t(\rvx_t | \rvy) \propto p_t(\rvx_t) \exp(-\rho \| \rvy - \rmA(\hat{\rvx}_0) \|_2^2)
\end{align}
where $\rho=1 / \sigma^2$. $p_t(\rvx_t)$ denotes the diffusion prior at noise level $t$ and $\nabla_{\rvx_t} \log p_t(\rvx_t)$ can be estimated using equation \ref{eq:score_eps}.  However, instead of using \ref{eq:dps_update} for sampling, which requires the intermediate distribution to be sufficiently close to the ground truth posterior at each noise level, we propose a new proposal-and-update algorithm based on annealed MCMC sampling. 

\textbf{Proposal Stage:} Given samples $\rvx_t$ that follow the intermediate distribution $\tilde{p}_t(\rvx_t | \rvy)$, we first denoise them to $t - 1$ following the standard diffusion step without considering the extra guidance. We denote the proposed sample as $\tilde{\rvx}_{t-1}$. The proposal step aims to provide a good initialization for the intermediate distribution at $\tilde{p}_{t-1}(\rvx_{t-1} | \rvy)$ with the help of the diffusion model. The samples $\tilde{\rvx}_{t-1}$ might not fully adhere to the target distribution $\tilde{p}_{t-1}(\rvx_{t-1} | \rvy)$, but we hope they are close enough to the target distribution so that we can obtain good samples with a few MCMC sampling steps.

\textbf{Exploration Stage:}  We then encourage the samples to explore the landscape at noise level $t-1$ and follow $\tilde{p}_{t-1}(\rvx_{t-1}| \rvy)$ with MCMC updates. We employ  Langevin Dynamics \citep{roberts1996exponential} as the transition kernel. Specifically, starting from $\rvx_{t-1}^0 = \tilde{\rvx}_{t-1}$, we iterate the following updates:
\begin{align}
    \rvx_{t-1}^{k+1} = \rvx_{t-1}^{k} + \eta_{t-1}\nabla_{\rvx_{t-1}^{k}} \log \tilde{p}_{t-1}(\rvx_{t-1}^{k} | \rvy) + \sqrt{2\eta_{t-1}} \rvw, \; \rvw \sim \mathcal{N}(0, \rmI) 
    \label{eq:Langevin}
\end{align}
We named our algorithm \textbf{D}iffusion \textbf{P}osterior \textbf{MC}MC (\textbf{DPMC}). We illustrate DPMC Figure \ref{fig:diagram} and in Algorithm \ref{alg:dpm}. We also provide a theoretical analysis which shows that the conditional distribution derived by DPMC is $\varepsilon$-close to the ground truth, i.e.
\[\TV(q_0(\rvx_0\mid \rvy)~\|~p^*(\rvx_0\mid \rvy)) \leq \varepsilon\]
with a carefully chosen step size $\eta_t$ and inner loop $K$ of the Langevin MCMC algorithm, under several assumptions on the score estimation error, conditional probability approximation error as well as some convexity and Lipschitz continuity conditions. Detailed assumptions and main theorem for the convergence of our DPMC algorithm is deferred to Appendix \ref{sec:theory}.


\begin{figure}
    \centering 
    \includegraphics[width=.85\textwidth]{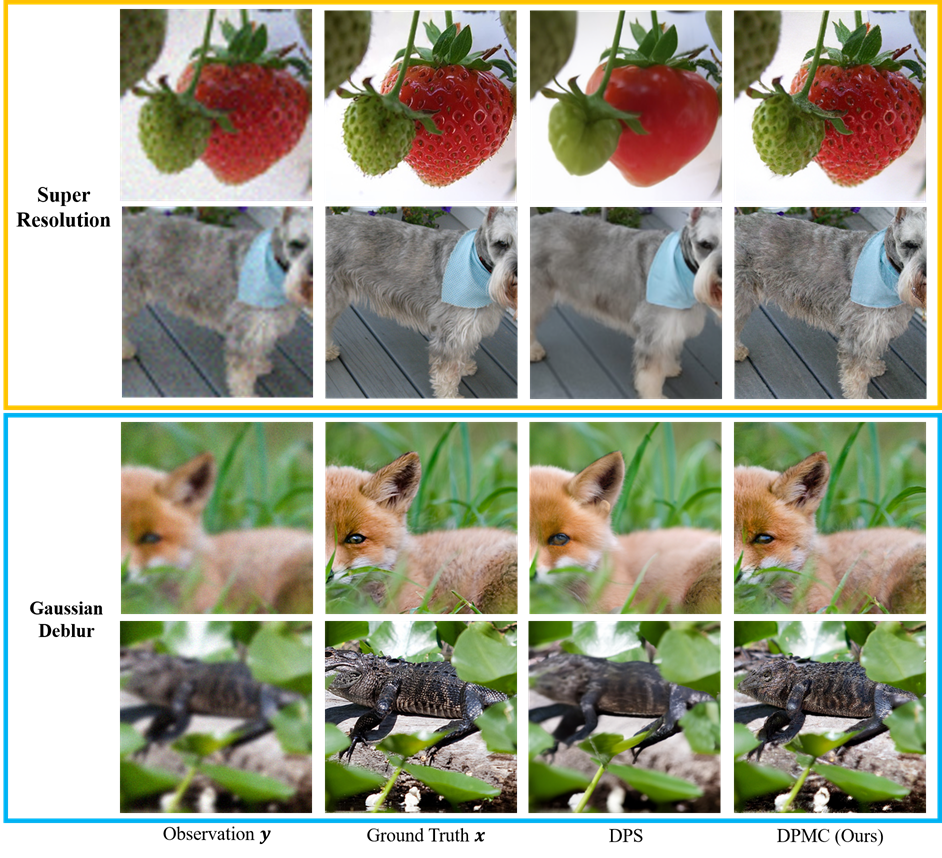}
    \caption{Qualitative comparison between DPS samples and our DPMC samples. Images in the top rows are from super resolution task. Images in the bottom rows are from Gaussian deblurring task.}
    \label{fig:illustration_DPS}
\end{figure}

\noindent {\bf Implementation Details} ~
We apply the same implementation as \cite{DPS_ChungKMKY23} in equation \ref{eq:dps_zeta} by introducing a step size parameter $\zeta_t$ to be inverse proportional to optimization distance $\| \rvy - \rmA(\hat{\rvx}_0) \|_2$. This setting equals to sampling according to the prior distribution tilted by an exponential distribution of the optimization distance. In practice, we calculate $\nabla_{\rvx_t} \tilde{p}_t(\rvx_t | \rvy)$ at each step using the following equation:

\begin{align}
    \nabla_{\rvx_t} \log\tilde{p}_t(\rvx_t | \rvy) =  \nabla_{\rvx_t} \log p_t(\rvx_t) -\xi_t \nabla_{\rvx_t} \| \rvy - \rmA(\hat{\rvx}_0) \|_2 
    \label{eq:inter_distribution}
\end{align}
where $\xi_t$ represents a constant or variable weighting schedule. Note that we replace square $l_2$ norm by $l_2$ norm itself. According to \cite{DPS_ChungKMKY23, song2023improved}, using $l_2$ norm might make the optimization to be more robust to outliers and can achieve more stable results than square $l_2$ norm as it imposes a smaller penalty for large errors. 

While adding MCMC exploration might inevitably introduce more sampling steps at each intermediate distribution and thus increase the sampling time, we find that with DPMC, we can counteract this by reducing the number of intermediate distributions needed. In fact, we find that using 200 intermediate distributions is sufficient for DPMC. Moreover, during the early stage of diffusion, the samples are very close to Gaussian noise. Considering that the reason for inserting intermediate distributions is to provide a good initialization for sampling more complex distributions that are very different than Gaussian, we can skip those early stages with high noise levels and start MCMC sampling directly from a moderate noise level. On the other hand, at very low noise levels, $p(\rvx_0|\rvx_t)$ almost becomes single-modal, and the approximation $p(\rvy|\rvx_t) \sim p(\rvy|\rvx_0)$ becomes sufficiently accurate. Thus, we can directly follow DPS without the need for additional MCMC sampling. In fact, we find that DPMC performs well when we apply the proposal and exploration steps only to the middle steps of the sampling process, while following the original DPS setting at both ends. This further reduces the number of evaluation steps needed. In practice, DPMC can achieve much better results compared to DPS with an even smaller number of evaluations (NFE). 

\section{Experiments}
In this section, we conduct experiments on the DPMC method proposed by us, which outperforms existing baselines on diffusion posterior sampling. To start with, we thoroughly introduce the models, datasets and settings of our experiments. 
\subsection{Experiment Settings}
\label{sec:exp_setting}
\textbf{Datasets and Pretrained Model:} Following \cite{DPS_ChungKMKY23}, we test our algorithm on FFHQ $256 \times 256$ dataset \cite{KarrasLA19} and ImageNet $256 \times 256$ dataset \cite{DengDSLL009}. Same as \cite{DPS_ChungKMKY23, dou2023diffusion}, we use 1k validation images for each dataset. For FFHQ dataset, we use the pretrained model provided by \cite{DPS_ChungKMKY23}. For ImageNet dataset, we use the pretrained model provided by \cite{DhariwalN21}. 

\begin{figure}[t!]
    \centering 
    \includegraphics[width=.95\textwidth]{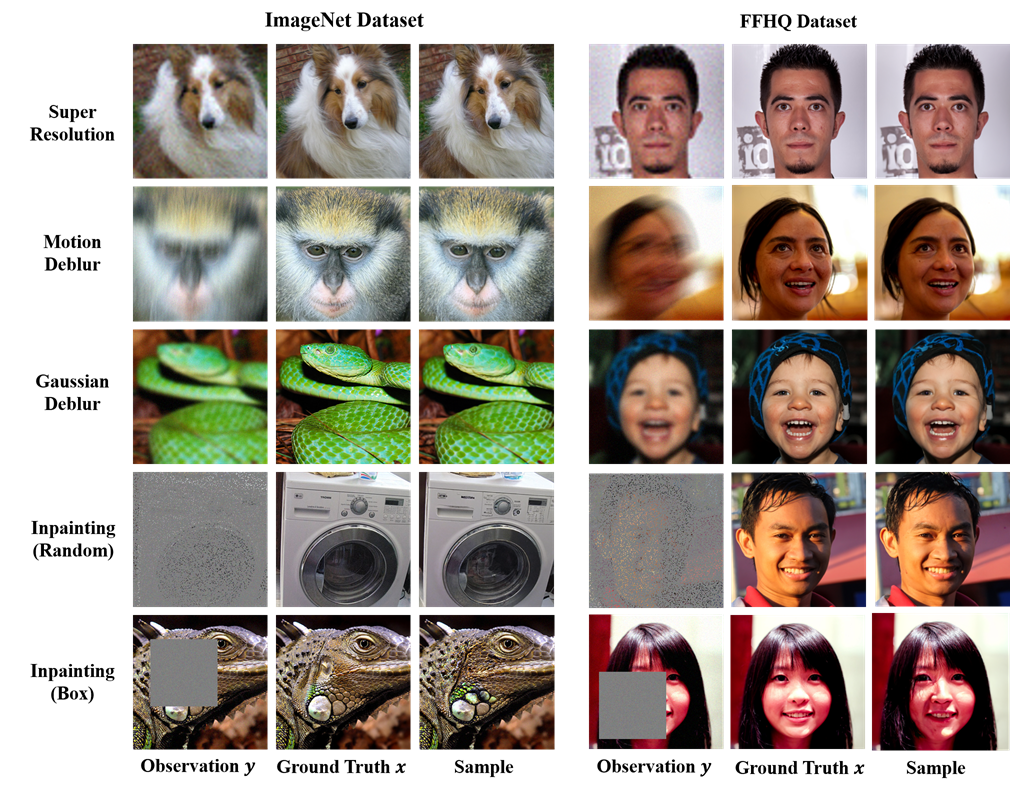}
    \caption{Qualitative results of DPMC on different linear inverse problem tasks.}
    \label{fig:all_samples}
\end{figure}

\textbf{Inverse Problems:} We evaluate the effectiveness of our DPMC algorithm on the following inverse problems: (i) Super-resolution with 4x bicubic downsampling as the forward measurement; (ii) Random inpainting using both box masks and random masks; (iii) Deblurring with Gaussian blur kernels and motion blur kernels \footnote{\href{https://github.com/LeviBorodenko/motionblur}{https://github.com/LeviBorodenko/motionblur}}; (iv) Phase retrieval, where we perform a Fourier transformation on each image and use only the Fourier magnitude as the measurement. Among these inverse problems, (i), (ii), and (iii) are linear inverse problems, while (iv) is a nonlinear inverse problem. We use Gaussian noise with $\sigma = 0.05$ for all tasks and set task-related parameters according to \cite{DPS_ChungKMKY23, dou2023diffusion} to ensure a fair comparison.

\textbf{Hyper-Parameter Setting:} 
We use the DDIM sampler with default variance $\sigma_t = 0$ as our diffusion proposal sampler. We use $T = 200$ intermediate distributions and $K = 4$ MCMC steps at each intermediate distribution. Empirically, we set $\eta_t = \eta \beta_t$ and $\xi_t = \xi \bar{\alpha}^3_t$, where $\eta$ and $\xi$ are task-related constants. We find these schedules work well across all settings. The detailed parameter settings can be found in Section \ref{sec:exp_detail} in the Appendix. As discussed in \ref{sec:DPM}, we apply the proposal-exploration step in the middle 60\% of the total sampling steps and resort to original DPS sampler step at the first 30\% and the last 10\% noise levels. An exception is the inpainting task with a box-type mask on the ImageNet dataset, where we find that applying the proposal-exploration step until clean images yields better results. Considering both the initial proposal step and the MCMC sampling steps, the NFE of this setting is around 700, which is smaller than the 1000 NFE of DPS. All of the experiments are carried out on a single Nvidia A100 GPU. We report the running clock time of DPMC in Table \ref{tab:running time} in Appendix \ref{sec:more_exp}.

\textbf{Baselines:}
On linear inverse problems, we compare DPMC algorithm with the original DPS\cite{DPS_ChungKMKY23}, filter posterior sampling (FPS) with and without sequential Monte Carlo sampling\cite{dou2023diffusion}, denoising diffusion
restoration models (DDRM) \cite{kawar2022denoising}, manifold constrained gradients (MCG) \cite{ChungSRY22}, plug-and-play alternating direction method of multiplier (PnP-ADMM) \cite{ChanWE17}, Score-Based SDE \cite{Score_base_0011SKKEP21, ChoiKJGY21} and alternating direction method of multiplier with total-variation (TV) sparsity regularized optimization (ADMM-TV). On phase retrieval, we compare DPMC with DPS\cite{DPS_ChungKMKY23}, oversampling
smoothness (OSS)\cite{rodriguez2013oversampling}, Hybrid input-output (HIO) \cite{fienup1987phase} and error reduction (ER) \cite{fienup1982phase} algorithm. 

\subsection{Experimental Results} 

\textbf{Linear Inverse Problems:} We carried out experiments on FFHQ and ImageNet datasets. We show qualitative samples of each task in Figure \ref{fig:all_samples}. DPMC is able to generate valid samples given the noisy, degraded input. Note that, as discussed in \ref{sec:inv_problem}, given the information loss in the image degradation process, exactly restoring the original $\rvx$ is ill-posed. Instead, an effective algorithm should be able to fill in meaningful content that agrees with the noisy observation. This is particularly true for tasks with large information loss, such as inpainting large areas in the image with random or box masks. We show an example in Figure \ref{fig:inpainting_box_seed}, where we present samples generated with different random seeds using the same noisy observation $\rvy$ in the inpainting task with a box-shaped mask. DPMC is capable of generating various samples that agree with the observed part. 

Qualitative results are shown in Table \ref{tab:FFHQ_results} and Table \ref{tab:Imagenet_results}. Following \cite{DPS_ChungKMKY23, dou2023diffusion}, we report the LPIPS score \cite{LPIPS_ZhangIESW18} and FID \cite{FID_HeuselRUNH17} score. The LPIPS score measures the similarity of predicted samples with the ground truth at the single image level. As discussed in \cite{LPIPS_ZhangIESW18}, unlike PSNR and SSIM, which capture shallow, low-level features and might fail to account for many nuances of human perception, LPIPS focuses more on structured information related to human perception. On the other hand, FID measures the distribution differences between generated samples and observations. We believe these two metrics are well-suited to reflect sample performance given the many-to-one mapping nature between $\rvy$ and $\rvx$. As we can see, DPMC achieves similar or better results than DPS on both datasets across all tasks using less NFE. This is also evident in the qualitative comparisons in Figure \ref{fig:illustration_DPS}, where DPS provides blurred samples and our DPMC fills in more vivid details. Compared to other strong baselines, such as FPS or FPS-SMC \cite{dou2023diffusion}, DPMC achieves superior results on most tasks, especially in terms of FID. We have included more qualitative and quantitative comparison with strong baselines in Appendix \ref{sec:morebaselines}. These results demonstrate the effectiveness of introducing MCMC in the sampling process. 

\textbf{Nonlinear Inverse Problem:} We conducted the phase retrieval experiment on the FFHQ dataset. For this, we utilized $T = 200$ intermediate distributions and $K = 6$ MCMC sampling steps at each intermediate distribution. The proposal-exploration step was still applied to the middle 60\% of the sampling steps, corresponding to 920 NFE. Similar to DPS, we observed that the quality of final samples depend on the initialization. Therefore, we followed DPS by generating four different samples and selecting the best one. Our qualitative results are shown in Figure \ref{fig:phase_retrieval}, and the quantitative results are reported in Table \ref{tab:phase_retrieval}. Compared to DPS, our DPMC achieved better LPIPS and a similar FID score.

\begin{table}[ht]
\centering
\caption{Quantitative results of various linear inverse problems on FFHQ $256 \times 256$-1k validation set. \textbf{Bold} denotes the best result for each task and \underline{underline} denotes the second best result.}
\noindent\resizebox{\textwidth}{!}{
\label{tab:FFHQ_results}
\begin{tabular}{cllllllllll}
\toprule
\textbf{Methods} & \multicolumn{2}{c}{\textbf{Super Resolution}} & \multicolumn{2}{c}{\textbf{Inpainting (box)}} & \multicolumn{2}{c}{\textbf{Gaussian Deblur}} & \multicolumn{2}{c}{\textbf{Inpainting (random)}} & \multicolumn{2}{c}{\textbf{Motion Deblur}} \\ \cmidrule(l){2-11} 
& \textbf{FID} & \textbf{LPIPS} & \textbf{FID} & \textbf{LPIPS} & \textbf{FID} & \textbf{LPIPS} & \textbf{FID} & \textbf{LPIPS} & \textbf{FID} & \textbf{LPIPS} \\
\midrule
DPMC (Ours) & \textbf{21.93} & \underline{0.212} & \textbf{19.59} & 0.160 & \textbf{21.34} & \textbf{0.210} & \underline{21.26} & \textbf{0.205} & \textbf{20.73} & \textbf{0.213} \\
\midrule
FPS   \cite{dou2023diffusion}         & 26.66 & \underline{0.212} & \underline{26.13} & \textbf{0.141} & 30.03 & \underline{0.248} & 35.21 & 0.265 & 26.18 & \underline{0.221} \\
FPS-SMC   \cite{dou2023diffusion}     & \underline{26.62} & \textbf{0.210} & 26.51 & \underline{0.150} & \underline{29.97} & 0.253 & 33.10 & 0.275 & \underline{26.12} & 0.227 \\
DPS \cite{DPS_ChungKMKY23}         & 39.35 & 0.214 & 33.12 & 0.168 & 44.05 & 0.257 & \textbf{21.19} & \underline{0.212} & 39.92 & 0.242 \\
DDRM \cite{kawar2022denoising}          & 62.15 & 0.294 & 42.93 & 0.204 & 74.92 & 0.332 & 69.71 & 0.587 & -     & -     \\
MCG \cite{ChungSRY22}           & 87.64 & 0.520 & 40.11 & 0.309 & 101.2 & 0.340 & 29.26 & 0.286 & -     & -     \\
PnP-ADMM  \cite{ChanWE17}     & 66.52 & 0.353 & 151.9 & 0.406 & 90.42 & 0.441 & 123.6 & 0.692 & -     & -     \\
Score-SDE  \cite{Score_base_0011SKKEP21, ChoiKJGY21}    & 96.72 & 0.563 & 60.06 & 0.331 & 109.0 & 0.403 & 76.54 & 0.612 & -     & -     \\
ADMM-TV        & 110.6 & 0.428 & 68.94 & 0.322 & 186.7 & 0.507 & 181.5 & 0.463 & -     & -     \\
\bottomrule
\end{tabular}}
\end{table}

\begin{table}[ht]
\centering
\caption{Quantitative results of various linear inverse problems on ImageNet $256 \times 256$-1k validation set. \textbf{Bold} denotes the best result for each task and \underline{underline} denotes the second best result.}
\noindent\resizebox{\textwidth}{!}{
\label{tab:Imagenet_results}
\begin{tabular}{clllllllllllll}
\toprule
 & \multicolumn{2}{c}{\textbf{Super Resolution}} & \multicolumn{2}{c}{\textbf{Inpainting (box)}} & \multicolumn{2}{c}{\textbf{Gaussian Deblur}} & \multicolumn{2}{c}{\textbf{Inpainting (random)}} & \multicolumn{2}{c}{\textbf{Motion Deblur}} \\ \cmidrule(l){2-11} 
\textbf{Methods} & \textbf{FID} & \textbf{LPIPS} & \textbf{FID} & \textbf{LPIPS} & \textbf{FID} & \textbf{LPIPS} & \textbf{FID} & \textbf{LPIPS} & \textbf{FID} & \textbf{LPIPS} \\ \midrule
DPMC (Ours) & \textbf{31.74} & \textbf{0.307} & \textbf{30.55} & 0.221 & \textbf{33.62} & \textbf{0.318} & \textbf{30.25} & \textbf{0.292} & \textbf{30.88} & \textbf{0.303}\\
\midrule
FPS  \cite{dou2023diffusion}            & 47.32        & 0.329          & \underline{33.19}        & \textbf{0.204}          & 54.41        & \underline{0.396}          & 42.68        & 0.325          & 52.22        & 0.370          \\
FPS-SMC \cite{dou2023diffusion}        & \underline{47.30}        & \underline{0.316}          & 33.24        & \underline{0.212}          & \underline{54.21}        & 0.403          & 42.77        & 0.328          & \underline{52.16}        & \underline{0.365}          \\
DPS \cite{DPS_ChungKMKY23}             & 50.66        & 0.337          & 38.82        & 0.262          & 62.72        & 0.444          & \underline{35.87}        & \underline{0.303}          & 56.08        & 0.389          \\
DDRM  \cite{kawar2022denoising}           & 59.57        & 0.339          & 45.95        & 0.245          & 63.02        & 0.427          & 114.9        & 0.665          & -            & -              \\
MCG  \cite{ChungSRY22}            & 144.5        & 0.637          & 39.74        & 0.330          & 95.04        & 0.550          & 39.19        & 0.414          & -            & -              \\
PnP-ADMM   \cite{ChanWE17}       & 97.27        & 0.433          & 78.24        & 0.367          & 100.6        & 0.519          & 114.7        & 0.677          & -            & -              \\
Score-SDE  \cite{Score_base_0011SKKEP21, ChoiKJGY21}      & 170.7        & 0.701          & 54.07        & 0.354          & 120.3        & 0.667          & 127.1        & 0.659          & -            & -              \\
ADMM-TV          & 130.9        & 0.523          & 87.69        & 0.319          & 155.7        & 0.588          & 189.3        & 0.510          & -            & -              \\ \bottomrule
\end{tabular}}
\end{table}

\begin{figure}[ht]
    \centering 
    \includegraphics[width=.85\textwidth]{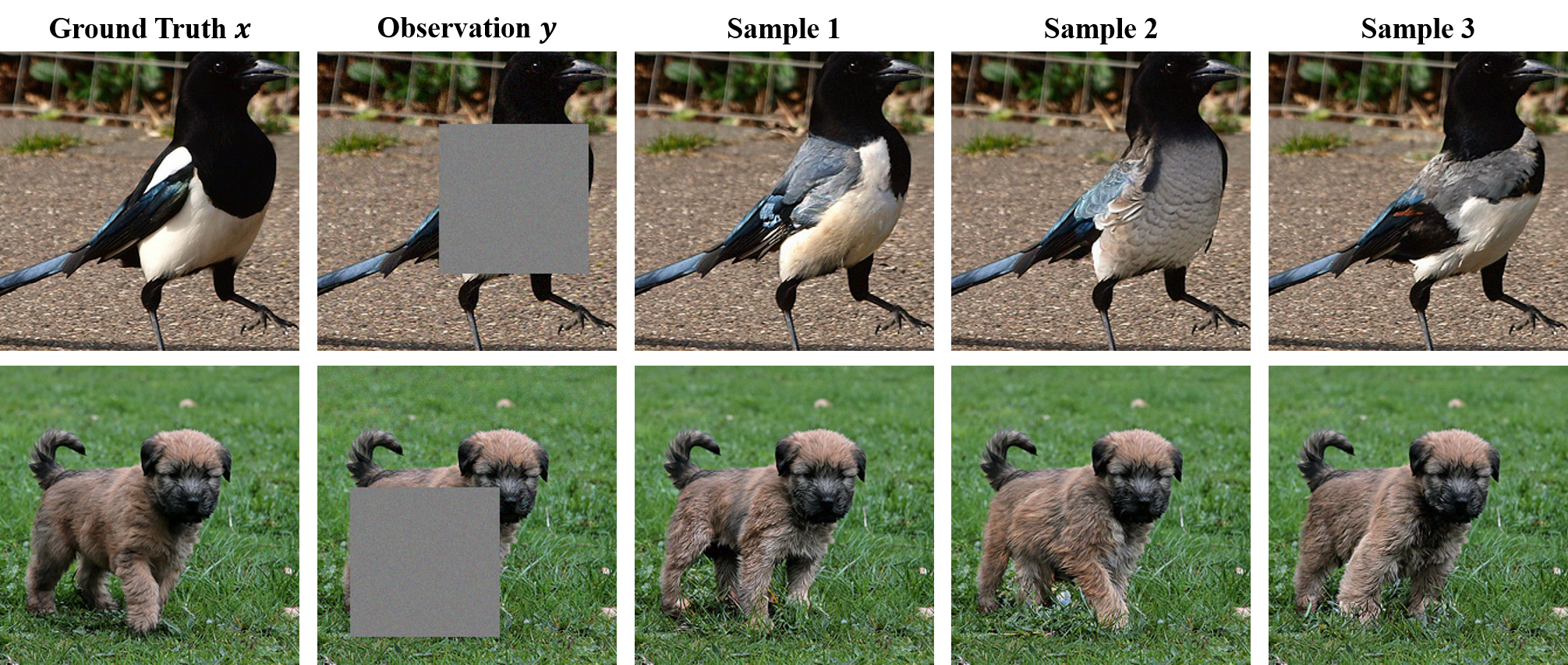}
    \caption{Inpainting results with box-shape mask using different random seed. The first column is the ground truth. The second column is the masked observation. The third to fifth columns are results get by our algorithm under different random seeds.}
    \label{fig:inpainting_box_seed}
\end{figure}

\begin{figure}[ht]
    \centering
    \begin{tabular}{cc}
        \begin{minipage}[c]{0.5\textwidth}
            \centering
            \includegraphics[width=\textwidth]{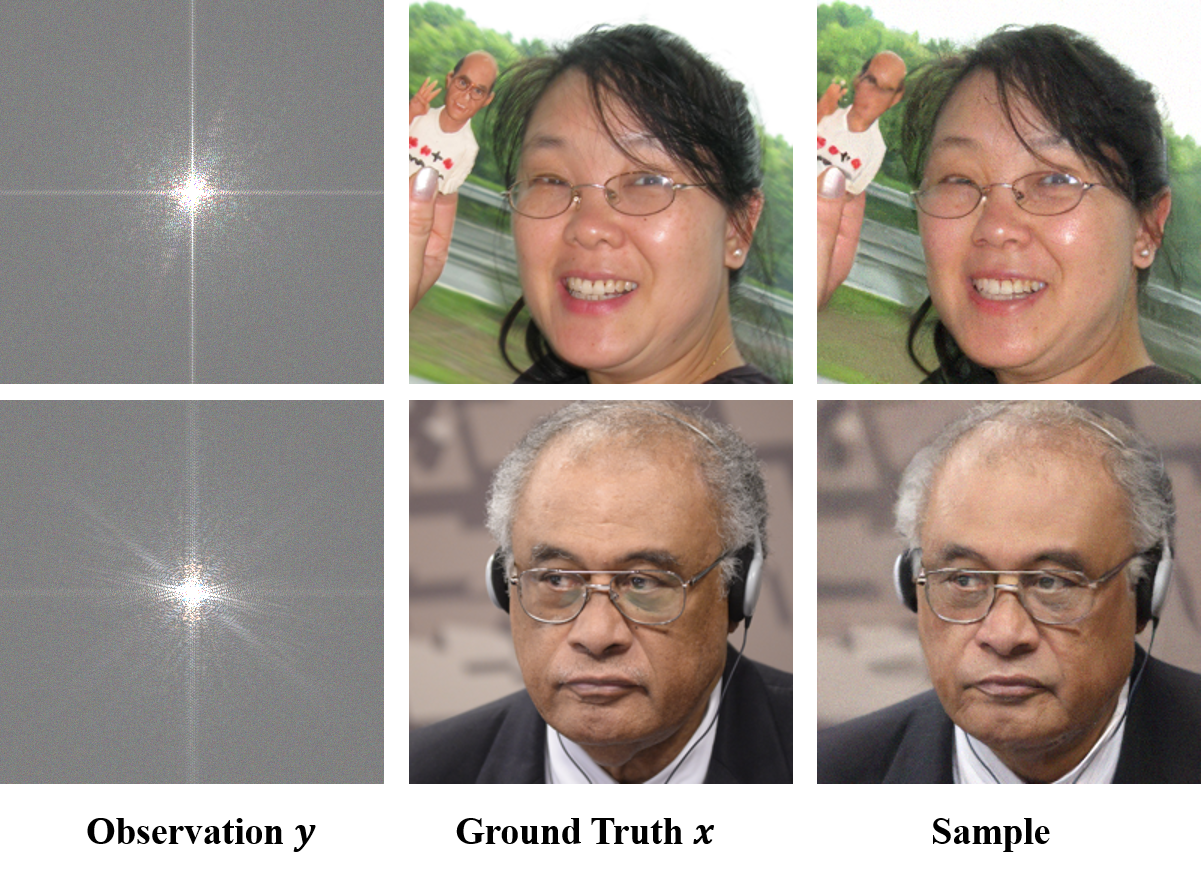}
            \captionof{figure}{Qualitative results of DPMC on FFHQ $256 \times 256$ phase retrieval task.}
            \label{fig:phase_retrieval}
        \end{minipage} &
        \begin{minipage}[c]{0.45\textwidth}
            \centering
            \captionof{table}{Quantitative results of DPMC on FFHQ $256 \times 256$ phase retrieval task.}
            \begin{tabular}{cll}
                \toprule
                \textbf{Methods} & \textbf{FID} & \textbf{LPIPS}\\
                \midrule
                DPMC (Ours) & \underline{55.64} & \textbf{0.372}  \\
                DPS \cite{DPS_ChungKMKY23} & \textbf{55.61} & \underline{0.399} \\
                OSS \cite{rodriguez2013oversampling} & 137.7 & 0.635 \\
                HIO \cite{fienup1987phase} & 96.40 & 0.542 \\
                ER \cite{fienup1982phase} & 214.1 & 0.738 \\
                \bottomrule
            \end{tabular}
            \label{tab:phase_retrieval}
        \end{minipage}
    \end{tabular}
\end{figure}
\vspace{-0.15in}
\subsection{Ablation Study}
We conducted ablation studies on critical parameters of DPMC using the Gaussian deblur task on the FFHQ dataset. Table \ref{tab:ab_K} examines the impact of the number of MCMC sampling steps. Table \ref{tab:ab_T} explores the influence of the number of intermediate distributions. Table \ref{tab:ab_schedule} evaluates different weighting schedules by setting $\xi_t \propto f(\bar{\alpha}_t)$, where $f(\cdot)$ ranges from a constant function to $\bar{\alpha}_t^4$. For each ablation setting, we adjusted the task-related parameters $\eta$ and $\xi$ to optimize the current configuration. The results demonstrate that using more sampling steps or intermediate distributions improves performance, with $T=200$ and $K=4$ being a good choice for balancing performance and sample efficiency. Additionally, setting $\xi_t = \xi \bar{\alpha}_t^3$ yields better results compared to constant scheduling or other alternatives.

\begin{table}[H]
\label{tab:ablation}
\centering
\caption{Ablation studies on number of MCMC sampling step $K$, number of intermediate distribution $T$ and $\epsilon_t$ schedule. In \ref{tab:ab_K}, we keep the same number of intermediate distributions and change the number of MCMC sampling steps; in \ref{tab:ab_T}, we keep the same MCMC sampling steps and change the number of intermediate distributions; in \ref{tab:ab_schedule}, we keep the number of MCMC steps and the number of intermediate distributions and change different weighted schedule $\xi_t$.}

\begin{tabular}{cc}
  \begin{subtable}[t]{0.45\textwidth}
    \centering
    \caption{Keep $T=200$, Change $K$}
    \begin{tabular}{ccccc}
      \toprule
      & \textbf{K=1} & \textbf{K=2} & \textbf{K=4} & \textbf{K=6} \\ 
      \midrule
      \textbf{LPIPS} & 0.227 & 0.214 & 0.210 & 0.209 \\ 
      \textbf{FID} & 26.81 & 22.00 & 21.34 & 21.25 \\ 
      \bottomrule
    \end{tabular}
    \label{tab:ab_K}
  \end{subtable}
  &
  \begin{subtable}[t]{0.45\textwidth}
    \centering
    \caption{Keep $K=4$, Change $T$}
    \begin{tabular}{ccccc}
      \toprule
      & \textbf{T=100} & \textbf{T=200} & \textbf{T=300} & \textbf{T=400} \\ 
      \midrule
      \textbf{LPIPS} & 0.220 & 0.210 & 0.209 & 0.207 \\ 
      \textbf{FID} & 21.89 & 21.34 & 21.29 & 21.35 \\ 
      \bottomrule
    \end{tabular}
    \label{tab:ab_T}
  \end{subtable}
\end{tabular}
\par\bigskip
\vspace{-0.15in}
\begin{subtable}[t]{0.9\textwidth}
  \centering
  \caption{Keep $T=200$, $K=4$, Change $\xi_t$ schedule.}
  \begin{tabular}{cccccc}
    \toprule
    & \textbf{Constant} & $\bar{\boldsymbol{\alpha}}_t$ & \textbf{$\bar{\boldsymbol{\alpha}}_t^2$} & \textbf{$\bar{\boldsymbol{\alpha}}_t^3$} & \textbf{$\bar{\boldsymbol{\alpha}}_t^4$} \\ 
    \midrule
    \textbf{LPIPS} & 0.232 & 0.219 & 0.213 & 0.210 & 0.209\\ 
    \textbf{FID} & 24.72 & 22.53 & 21.62 & 21.34 & 21.43\\ 
    \bottomrule
  \end{tabular}
  \label{tab:ab_schedule}
\end{subtable}
\vspace{-0.15in}
\end{table}
\section{Conclusion and Future Work}
\vspace{-0.05in}
In this study, we propose DPMC, an algorithm based on a specific formula of the posterior distribution and Annealed MCMC sampling to solve inverse problems. We demonstrate that DPMC results in superior sample quality compared to DPS across various inverse problems with fewer functional evaluations. Our study underscores the benefit of incorporating more sampling steps into each intermediate distribution to encourage exploration. Additionally, it is beneficial to trade in the number of intermediate distributions with the number of MCMC exploration steps. One potential limitation of our current work is the necessity to manually tune the weighting schedule and other hyper-parameters, whose optimal values can vary for different tasks. An interesting future direction is to explore scenarios where an explicit estimation $p(\rvx)$ is provided by an EBM or another likelihood-based estimation technique, enabling the use of more advanced samplers such as Hamiltonian Monte Carlo (HMC) with adaptive step sizes. As a powerful algorithm that can process images, DPMC might have the potential to cause negative social consequences, including deepfakes, misinformation, and privacy breaches. We believe that more research and resources are needed to mitigate these risks. 

\clearpage
\bibliographystyle{neurips_2024}
\bibliography{reference}

\begin{thebibliography}{10}

\bibitem{Sohl-DicksteinW15}
Sohl{-}Dickstein, J., E.~A. Weiss, N.~Maheswaranathan, et~al.
\newblock Deep unsupervised learning using nonequilibrium thermodynamics.
\newblock In \emph{International Conference on Machine Learning, {ICML}}. 2015.

\bibitem{DDPM_HoJA20}
Ho, J., A.~Jain, P.~Abbeel.
\newblock Denoising diffusion probabilistic models.
\newblock In \emph{Advances in Neural Information Processing Systems 33: Annual Conference on Neural Information Processing Systems, {NeurIPS}}. 2020.

\bibitem{SongE19}
Song, Y., S.~Ermon.
\newblock Generative modeling by estimating gradients of the data distribution.
\newblock In H.~M. Wallach, H.~Larochelle, A.~Beygelzimer, F.~d'Alch{\'{e}}{-}Buc, E.~B. Fox, R.~Garnett, eds., \emph{Advances in Neural Information Processing Systems 32: Annual Conference on Neural Information Processing Systems, {NeurIPS}}. 2019.

\bibitem{Score_base_0011SKKEP21}
Song, Y., J.~Sohl{-}Dickstein, D.~P. Kingma, et~al.
\newblock Score-based generative modeling through stochastic differential equations.
\newblock In \emph{The 9th International Conference on Learning Representations, {ICLR}}. 2021.

\bibitem{DhariwalN21}
Dhariwal, P., A.~Q. Nichol.
\newblock Diffusion models beat gans on image synthesis.
\newblock In \emph{Advances in Neural Information Processing Systems 34: Annual Conference on Neural Information Processing Systems, {NeurIPS}}. 2021.

\bibitem{ramesh2022hierarchical}
Ramesh, A., P.~Dhariwal, A.~Nichol, et~al.
\newblock Hierarchical text-conditional image generation with clip latents.
\newblock \emph{arXiv preprint arXiv:2204.06125}, 2022.

\bibitem{rombach2022high}
Rombach, R., A.~Blattmann, D.~Lorenz, et~al.
\newblock High-resolution image synthesis with latent diffusion models.
\newblock In \emph{Proceedings of the IEEE/CVF conference on computer vision and pattern recognition, {CVPR}}. 2022.

\bibitem{SahariaCSLWDGLA22}
Saharia, C., W.~Chan, S.~Saxena, et~al.
\newblock Photorealistic text-to-image diffusion models with deep language understanding.
\newblock In \emph{Advances in Neural Information Processing Systems 35: Annual Conference on Neural Information Processing Systems, {NeurIPS}}. 2022.

\bibitem{HoSGC0F22}
Ho, J., T.~Salimans, A.~A. Gritsenko, et~al.
\newblock Video diffusion models.
\newblock In \emph{Advances in Neural Information Processing Systems 35: Annual Conference on Neural Information Processing Systems, {NeurIPS}}. 2022.

\bibitem{sorareport}
OpenAI.
\newblock Sora: Creating video from text.
\newblock \emph{https://openai.com/index/sora/}, 2024.

\bibitem{ho2022imagen}
Ho, J., W.~Chan, C.~Saharia, et~al.
\newblock Imagen video: High definition video generation with diffusion models.
\newblock \emph{arXiv preprint arXiv:2210.02303}, 2022.

\bibitem{blattmann2023align}
Blattmann, A., R.~Rombach, H.~Ling, et~al.
\newblock Align your latents: High-resolution video synthesis with latent diffusion models.
\newblock In \emph{Proceedings of the IEEE/CVF Conference on Computer Vision and Pattern Recognition, {CVPR}}. 2023.

\bibitem{girdhar2023emu}
Girdhar, R., M.~Singh, A.~Brown, et~al.
\newblock Emu video: Factorizing text-to-video generation by explicit image conditioning.
\newblock \emph{arXiv preprint arXiv:2311.10709}, 2023.

\bibitem{kong2020diffwave}
Kong, Z., W.~Ping, J.~Huang, et~al.
\newblock Diffwave: A versatile diffusion model for audio synthesis.
\newblock \emph{arXiv preprint arXiv:2009.09761}, 2020.

\bibitem{chen2020wavegrad}
Chen, N., Y.~Zhang, H.~Zen, et~al.
\newblock Wavegrad: Estimating gradients for waveform generation.
\newblock \emph{arXiv preprint arXiv:2009.00713}, 2020.

\bibitem{LiTGLH22}
Li, X., J.~Thickstun, I.~Gulrajani, et~al.
\newblock Diffusion-lm improves controllable text generation.
\newblock In \emph{Advances in Neural Information Processing Systems 35: Annual Conference on Neural Information Processing Systems, {NeurIPS}}. 2022.

\bibitem{zero123_LiuWHTZV23}
Liu, R., R.~Wu, B.~V. Hoorick, et~al.
\newblock Zero-1-to-3: Zero-shot one image to 3d object.
\newblock In \emph{{IEEE/CVF} International Conference on Computer Vision, {ICCV}}. {IEEE}, 2023.

\bibitem{PooleJBM23}
Poole, B., A.~Jain, J.~T. Barron, et~al.
\newblock Dreamfusion: Text-to-3d using 2d diffusion.
\newblock In \emph{The Eleventh International Conference on Learning Representations, {ICLR}}. 2023.

\bibitem{wu2023reconfusion}
Wu, R., B.~Mildenhall, P.~Henzler, et~al.
\newblock Reconfusion: 3d reconstruction with diffusion priors.
\newblock \emph{arXiv preprint arXiv:2312.02981}, 2023.

\bibitem{shi2023mvdream}
Shi, Y., P.~Wang, J.~Ye, et~al.
\newblock Mvdream: Multi-view diffusion for 3d generation.
\newblock \emph{arXiv preprint arXiv:2308.16512}, 2023.

\bibitem{gao2024cat3d}
Gao, R., A.~Holynski, P.~Henzler, et~al.
\newblock Cat3d: Create anything in 3d with multi-view diffusion models.
\newblock \emph{arXiv preprint arXiv:2405.10314}, 2024.

\bibitem{JalalADPDT21}
Jalal, A., M.~Arvinte, G.~Daras, et~al.
\newblock Robust compressed sensing {MRI} with deep generative priors.
\newblock In \emph{Advances in Neural Information Processing Systems 34: Annual Conference on Neural Information Processing Systems, {NeurIPS}}. 2021.

\bibitem{KadkhodaieS21}
Kadkhodaie, Z., E.~P. Simoncelli.
\newblock Stochastic solutions for linear inverse problems using the prior implicit in a denoiser.
\newblock In \emph{Advances in Neural Information Processing Systems 34: Annual Conference on Neural Information Processing Systems, {NeurIPS}}. 2021.

\bibitem{kawar2022denoising}
Kawar, B., M.~Elad, S.~Ermon, et~al.
\newblock Denoising diffusion restoration models.
\newblock \emph{Advances in Neural Information Processing Systems}, 35:23593--23606, 2022.

\bibitem{ChoiKJGY21}
Choi, J., S.~Kim, Y.~Jeong, et~al.
\newblock {ILVR:} conditioning method for denoising diffusion probabilistic models.
\newblock In \emph{{IEEE/CVF} International Conference on Computer Vision, {ICCV}}. 2021.

\bibitem{ChungSRY22}
Chung, H., B.~Sim, D.~Ryu, et~al.
\newblock Improving diffusion models for inverse problems using manifold constraints.
\newblock In \emph{Advances in Neural Information Processing Systems 35: Annual Conference on Neural Information Processing Systems, {NeurIPS 2022}}. 2022.

\bibitem{DPS_ChungKMKY23}
Chung, H., J.~Kim, M.~T. McCann, et~al.
\newblock Diffusion posterior sampling for general noisy inverse problems.
\newblock In \emph{The 11th International Conference on Learning Representations, {ICLR}}. 2023.

\bibitem{dou2023diffusion}
Dou, Z., Y.~Song.
\newblock Diffusion posterior sampling for linear inverse problem solving: A filtering perspective.
\newblock In \emph{The Twelfth International Conference on Learning Representations}. 2023.

\bibitem{mardani2023variational}
Mardani, M., J.~Song, J.~Kautz, et~al.
\newblock A variational perspective on solving inverse problems with diffusion models.
\newblock \emph{arXiv preprint arXiv:2305.04391}, 2023.

\bibitem{SongVMK23}
Song, J., A.~Vahdat, M.~Mardani, et~al.
\newblock Pseudoinverse-guided diffusion models for inverse problems.
\newblock In \emph{The Eleventh International Conference on Learning Representations, {ICLR}}. 2023.

\bibitem{RoutRDCDS23}
Rout, L., N.~Raoof, G.~Daras, et~al.
\newblock Solving linear inverse problems provably via posterior sampling with latent diffusion models.
\newblock In \emph{Advances in Neural Information Processing Systems 36: Annual Conference on Neural Information Processing Systems, {NeurIPS}}. 2023.

\bibitem{song2023solving}
Song, B., S.~M. Kwon, Z.~Zhang, et~al.
\newblock Solving inverse problems with latent diffusion models via hard data consistency.
\newblock \emph{arXiv preprint arXiv:2307.08123}, 2023.

\bibitem{ChungY22}
Chung, H., J.~C. Ye.
\newblock Score-based diffusion models for accelerated {MRI}.
\newblock \emph{Medical Image Anal.}, 80:102479, 2022.

\bibitem{FengSRCBF23}
Feng, B.~T., J.~Smith, M.~Rubinstein, et~al.
\newblock Score-based diffusion models as principled priors for inverse imaging.
\newblock In \emph{{IEEE/CVF} International Conference on Computer Vision, {ICCV}}. 2023.

\bibitem{ZhuZLCWTG23}
Zhu, Y., K.~Zhang, J.~Liang, et~al.
\newblock Denoising diffusion models for plug-and-play image restoration.
\newblock In \emph{{IEEE/CVF} Conference on Computer Vision and Pattern Recognition, {CVPR} 2023 - Workshops}. 2023.

\bibitem{Bridge_ChungKY23}
Chung, H., J.~Kim, J.~C. Ye.
\newblock Direct diffusion bridge using data consistency for inverse problems.
\newblock In \emph{Advances in Neural Information Processing Systems 36: Annual Conference on Neural Information Processing Systems, {NeurIPS}}. 2023.

\bibitem{Song20220011S0E22}
Song, Y., L.~Shen, L.~Xing, et~al.
\newblock Solving inverse problems in medical imaging with score-based generative models.
\newblock In \emph{The Tenth International Conference on Learning Representations, {ICLR}}. 2022.

\bibitem{efron2011tweedie}
Efron, B.
\newblock Tweedie’s formula and selection bias.
\newblock \emph{Journal of the American Statistical Association}, 106(496):1602--1614, 2011.

\bibitem{XieLZW16}
Xie, J., Y.~Lu, S.~Zhu, et~al.
\newblock A theory of generative convnet.
\newblock In \emph{Proceedings of the 33nd International Conference on Machine Learning, {ICML}}. 2016.

\bibitem{NijkampHZW19}
Nijkamp, E., M.~Hill, S.~Zhu, et~al.
\newblock Learning non-convergent non-persistent short-run {MCMC} toward energy-based model.
\newblock In \emph{Advances in Neural Information Processing Systems 32: Annual Conference on Neural Information Processing Systems 2019, {NeurIPS}}. 2019.

\bibitem{DuLTM21}
Du, Y., S.~Li, J.~B. Tenenbaum, et~al.
\newblock Improved contrastive divergence training of energy-based models.
\newblock In \emph{Proceedings of the 38th International Conference on Machine Learning, {ICML}}. 2021.

\bibitem{DuDSTDFSDG23}
Du, Y., C.~Durkan, R.~Strudel, et~al.
\newblock Reduce, reuse, recycle: Compositional generation with energy-based diffusion models and {MCMC}.
\newblock In \emph{International Conference on Machine Learning, {ICML}}. 2023.

\bibitem{KarrasAAL22}
Karras, T., M.~Aittala, T.~Aila, et~al.
\newblock Elucidating the design space of diffusion-based generative models.
\newblock In \emph{Advances in Neural Information Processing Systems 35: Annual Conference on Neural Information Processing Systems, {NeurIPS}}. 2022.

\bibitem{PeeblesX23}
Peebles, W., S.~Xie.
\newblock Scalable diffusion models with transformers.
\newblock In \emph{{IEEE/CVF} International Conference on Computer Vision, {ICCV}}. 2023.

\bibitem{DDIM_SongME21}
Song, J., C.~Meng, S.~Ermon.
\newblock Denoising diffusion implicit models.
\newblock In \emph{The 9th International Conference on Learning Representations, {ICLR}}. 2021.

\bibitem{neal2001annealed}
Neal, R.
\newblock Annealed importance sampling. statistics and computing.
\newblock 2001.

\bibitem{roberts1996exponential}
Roberts, G.~O., R.~L. Tweedie.
\newblock Exponential convergence of langevin distributions and their discrete approximations.
\newblock \emph{Bernoulli}, pages 341--363, 1996.

\bibitem{song2023improved}
Song, Y., P.~Dhariwal.
\newblock Improved techniques for training consistency models.
\newblock \emph{arXiv preprint arXiv:2310.14189}, 2023.

\bibitem{KarrasLA19}
Karras, T., S.~Laine, T.~Aila.
\newblock A style-based generator architecture for generative adversarial networks.
\newblock In \emph{{IEEE} Conference on Computer Vision and Pattern Recognition, {CVPR}}. 2019.

\bibitem{DengDSLL009}
Deng, J., W.~Dong, R.~Socher, et~al.
\newblock Imagenet: {A} large-scale hierarchical image database.
\newblock In \emph{{IEEE} Conference on Computer Vision and Pattern Recognition, {CVPR}}. 2009.

\bibitem{ChanWE17}
Chan, S.~H., X.~Wang, O.~A. Elgendy.
\newblock Plug-and-play {ADMM} for image restoration: Fixed-point convergence and applications.
\newblock \emph{{IEEE} Trans. Computational Imaging}, 3(1):84--98, 2017.

\bibitem{rodriguez2013oversampling}
Rodriguez, J.~A., R.~Xu, C.-C. Chen, et~al.
\newblock Oversampling smoothness: an effective algorithm for phase retrieval of noisy diffraction intensities.
\newblock \emph{Journal of applied crystallography}, 46(2):312--318, 2013.

\bibitem{fienup1987phase}
Fienup, C., J.~Dainty.
\newblock Phase retrieval and image reconstruction for astronomy.
\newblock \emph{Image recovery: theory and application}, 231:275, 1987.

\bibitem{fienup1982phase}
Fienup, J.~R.
\newblock Phase retrieval algorithms: a comparison.
\newblock \emph{Applied optics}, 21(15):2758--2769, 1982.

\bibitem{LPIPS_ZhangIESW18}
Zhang, R., P.~Isola, A.~A. Efros, et~al.
\newblock The unreasonable effectiveness of deep features as a perceptual metric.
\newblock In \emph{{IEEE} Conference on Computer Vision and Pattern Recognition, {CVPR}}. 2018.

\bibitem{FID_HeuselRUNH17}
Heusel, M., H.~Ramsauer, T.~Unterthiner, et~al.
\newblock Gans trained by a two time-scale update rule converge to a local nash equilibrium.
\newblock In \emph{Advances in Neural Information Processing Systems 30: Annual Conference on Neural Information Processing Systems, {NIPS}}. 2017.

\bibitem{cheng2018convergence}
Cheng, X., P.~Bartlett.
\newblock Convergence of langevin mcmc in kl-divergence.
\newblock In \emph{Algorithmic Learning Theory}, pages 186--211. PMLR, 2018.

\bibitem{chen2022sampling}
Chen, S., S.~Chewi, J.~Li, et~al.
\newblock Sampling is as easy as learning the score: theory for diffusion models with minimal data assumptions.
\newblock \emph{arXiv preprint arXiv:2209.11215}, 2022.

\bibitem{DDNM_WangYZ23}
Wang, Y., J.~Yu, J.~Zhang.
\newblock Zero-shot image restoration using denoising diffusion null-space model.
\newblock In \emph{The Eleventh International Conference on Learning Representations, {ICLR}}. 2023.

\bibitem{DiffPir_ZhuZLCWTG23}
Zhu, Y., K.~Zhang, J.~Liang, et~al.
\newblock Denoising diffusion models for plug-and-play image restoration.
\newblock In \emph{{IEEE/CVF} Conference on Computer Vision and Pattern Recognition, {CVPR}}. 2023.

\bibitem{CCDF_ChungSY22}
Chung, H., B.~Sim, J.~C. Ye.
\newblock Come-closer-diffuse-faster: Accelerating conditional diffusion models for inverse problems through stochastic contraction.
\newblock In \emph{{IEEE/CVF} Conference on Computer Vision and Pattern Recognition, {CVPR}}. 2022.

\end{thebibliography}

\appendix
\newpage
\section{Experimental Details}
In this study, for linear inverse problem, we use $T = 200$ intermediate distributions and $K = 4$ MCMC sampling step at each intermediate distributions as our default setting. For the nonlinear inverse problem, phase retrieval, we use T = 200 intermediate distributions and K = 6 MCMC sampling step at each intermediate distributions. Our main task specific parameters are the Guidance weight $\xi$ and Langevin step size coefficient $\eta$. We show the parameter setting for each task in Table \ref{tab:hyperparam}. 
All of our experiments are carried on a single Nvidia A100 GPU.

\label{sec:exp_detail}
\begin{table}[ht]
\centering
\caption{Hyper-parameter settings for various linear inverse problems. $\xi$: Guidance Weight; $\eta$: Langevin Step Size Coefficient}
\label{tab:hyperparam}
\begin{tabular}{cc}
    \begin{subfigure}[b]{0.45\linewidth}
    \centering
    \caption{FFHQ dataset}
    \label{tab:hyperparam_FFHQ}
    \begin{tabular}{lcc}
    \toprule
    Settings & $\xi$ & $\eta$ \\
    \midrule
    Super Resolution & 3.6e3 & 0.2 \\
    Inpainting (box) & 4.2e3 & 0.5 \\
    Gaussian Deblur & 6e3 & 0.3 \\
    Inpainting (random) & 6e3 & 0.5 \\
    Motion Deblur & 6.6e3 & 0.3 \\
    Phase Retrieval & 1.5e3 & 3.0 \\
    \bottomrule
    \end{tabular}
    \end{subfigure}
     &
    \begin{subfigure}[b]{0.45\linewidth}
    \centering
    \caption{ImageNet dataset}
    \label{tab:hyperparam_ImageNet}
    \begin{tabular}{lcc}
    \toprule
    Settings & $\xi$ & $\eta$ \\
    \midrule
    Super Resolution & 3.3e3 & 0.4 \\
    Inpainting (box) & 5.1e3 & 0.4 \\
    Gaussian Deblur & 4.5e3 & 0.3 \\
    Inpainting (random) & 5.7e3 & 0.5 \\
    Motion Deblur & 6e3 & 0.5 \\
    \bottomrule
    \end{tabular}
    \end{subfigure}
\end{tabular}
\end{table}

\section{Theoretical Analysis}
\label{sec:theory}
In this section, we provide some theoretical analysis on the DPMC model proposed by us. First, we start with some existing results on the KL-convergence of the Langevin MCMC algorithm. Denote $p^*$ as the target distribution over $\R^d$, and $s^*(\bx) := \nabla \log p^*(\bx)$ as its score function. The Langevin MCMC algorithm with step size $\eta$ is given by:
\[\bX_0 \sim p_0,~~\bX_{i+1} = \bX_i + \eta s^*(\bX_i) + \sqrt{2\eta} \cdot\xi_i\]
where $\xi_i \sim \mathcal N(0, \bm I_d)$. Before we state the convergence rate, we propose the strong convexity and Lipschitz continuity assumption that the target distribution $p^*$ needs to satisfy. 
\begin{assumption}[Strong Convexity and Lipschitz Continuity of Potential Function]
Let $U(\bX) = -\log p^*(\bX)$ be the potential function. It has $m$-strong convexity and its gradient has $L$-Lipschitz continuous, i.e.:
\[m\bm I_d \preceq \nabla^2 U(\bX) \preceq L\bm I_d ~~~\text{for}~\forall \bX\in\R^d.\]
\end{assumption}
Under this assumption, \cite{cheng2018convergence} propose the following total variation convergence result of the Langevin MCMC algorithm. 
\begin{lemma}[TV-Convergence of Langevin MCMC Algorithm]
\label{lemma:langevin}
After choosing the step size $\eta = \frac{m\varepsilon^2}{32Ld^2}$, and the number of iterations 
\[K = \frac{32L^2d \log(\TV(p_0\| p^*)/\varepsilon)}{m^2\varepsilon^2},\]
the last iterate distribution $p_K := \mathrm{Law}(\bX_K)$ holds that:
\[\TV\left(p_K \| p_0\right) \leq \varepsilon.\]
\end{lemma}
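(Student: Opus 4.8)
The plan is to prove the stronger statement that $\KL(p_K\|p^*)$ is small and then deduce the total-variation bound by Pinsker's inequality, following the route of \cite{cheng2018convergence}. The starting observation is that $m$-strong convexity of $U=-\log p^*$ implies, by the Bakry--\'Emery criterion, that $p^*$ satisfies a logarithmic Sobolev inequality with constant $m$: for every density $q$ absolutely continuous with respect to $p^*$,
\[\KL(q\|p^*)\ \le\ \frac{1}{2m}\,\E_{\bX\sim q}\!\left[\left\|\nabla\log\frac{q(\bX)}{p^*(\bX)}\right\|^2\right].\]
This is the device that converts entropy dissipation into exponential decay of $\KL$ along the continuous-time Langevin diffusion.

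Next I would analyze a single discrete step through its continuous-time interpolation: on $[i\eta,(i+1)\eta]$ run $d\widetilde{\bX}_t = s^*(\bX_i)\,dt+\sqrt{2}\,dB_t$ started at $\widetilde{\bX}_{i\eta}=\bX_i$, so that $\widetilde{\bX}_{(i+1)\eta}$ has the law $p_{(i+1)\eta}$ of $\bX_{i+1}$. A Fokker--Planck computation for the law $p_t$ of $\widetilde{\bX}_t$, followed by Young's inequality, yields a one-step entropy-dissipation estimate of the form
\[\frac{d}{dt}\KL(p_t\|p^*)\ \le\ -\tfrac{3}{4}\,\E_{p_t}\!\left[\left\|\nabla\log\frac{p_t}{p^*}\right\|^2\right]\ +\ C\,L^2 d\,(t-i\eta),\]
valid for $\eta$ at most an absolute constant times $1/L$; the error term is obtained by conditioning on $\bX_i$ and bounding $\E\|s^*(\widetilde{\bX}_t)-s^*(\bX_i)\|^2\le L^2\,\E\|\widetilde{\bX}_t-\bX_i\|^2$ using the Gaussian-plus-drift displacement over a time window of length $\eta$.

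Feeding the log-Sobolev inequality into the dissipation term (the slack factor $\tfrac14$ absorbing lower-order cross terms) gives $\frac{d}{dt}\KL(p_t\|p^*)\le -m\,\KL(p_t\|p^*)+C L^2 d\,(t-i\eta)$ on each interval; integrating one step, unrolling the recursion over $K$ steps, and summing the resulting geometric series produces a bound of the shape
\[\KL(p_K\|p^*)\ \le\ e^{-mK\eta}\,\KL(p_0\|p^*)\ +\ \frac{C' L^2 d\,\eta}{m}.\]
With the prescribed $\eta=\frac{m\varepsilon^2}{32Ld^2}$ the second (bias) term is driven down, and with the prescribed $K$ the exponent $mK\eta$ is large enough to kill the first (mixing) term, so that $\KL(p_K\|p^*)\le 2\varepsilon^2$; Pinsker's inequality $\TV(p_K\|p^*)\le\sqrt{\tfrac12\KL(p_K\|p^*)}$ then closes the argument. (As stated the conclusion should read $\TV(p_K\|p^*)\le\varepsilon$ rather than $\TV(p_K\|p_0)\le\varepsilon$.)

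The main obstacle is the second step: pinning down the one-step discretization error with the correct $\eta$-, $L$- and $d$-dependence and with constants sharp enough that the closed-form choices of $\eta$ and $K$ actually deliver $\varepsilon$. The delicate points are the Fokker--Planck identity for the interpolated law (whose effective drift is the conditional expectation $\E[s^*(\bX_i)\mid\widetilde{\bX}_t]$), the $L$-Lipschitz second-moment bound on the drift mismatch, and the bookkeeping that collapses the per-step biases into a single geometric sum without losing a polynomial factor in $d$; everything else — the log-Sobolev inequality, the Gr\"onwall integration, and Pinsker — is standard once that estimate is in hand.
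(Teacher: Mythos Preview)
Your proposal is correct and follows exactly the route the paper relies on. The paper does not give its own proof of this lemma; it simply cites Theorem~1 of \cite{cheng2018convergence} and remarks that (i) the specific Gaussian initialization used there is inessential, and (ii) the TV statement follows from the KL result via Pinsker. Your outline is precisely the Cheng--Bartlett argument: Bakry--\'Emery $\Rightarrow$ log-Sobolev with constant $m$, continuous-time interpolation of a discrete step, Fokker--Planck plus Lipschitz drift-mismatch to get the one-step entropy dissipation with an $O(L^2 d\,\eta)$ bias, Gr\"onwall/geometric summation, then Pinsker. You also correctly flag the typo in the conclusion (it should be $\TV(p_K\|p^*)\le\varepsilon$, not $\TV(p_K\|p_0)$), which the paper's subsequent use of the lemma in the proof of Theorem~\ref{thm:main} confirms.
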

Notice that in the original Theorem 1 of \cite{cheng2018convergence}, the authors set $p_0 = \mathcal N(0, \frac1m \bm I_d)$. However, the proof does not actually rely on this choice. Besides, we use the corollary of TV-distance convergence instead of the original KL-divergence result since TV distance holds triangular inequality, which benefits our analysis. 

Next, we study the distribution estimation error and provide a convergence guarantee for DPMC proposed by us. Our main result requires the following assumptions on the conditional distribution $p(\rvx_t\mid \rvy)$ as well as the unconditional score estimation error. 

\begin{assumption}[Unconditional Score Estimation Error]
\label{assump:uncond}
For all $k=1,2,\ldots, N$, it holds that:
\[\mathbb E_{\bx\sim p_{kh}} \|\hat{s}_{\theta}(\bx, kh) - \nabla\log p_{kh}(\bx)\|^2 \leq \varepsilon_{\mathrm{score}}^2\]
where $\hat{s}_{\theta}(\bx, t)$ is the pretrained score estimator we plug into our algorithm, and $h = T/N$ is the time step. 
\end{assumption}

\begin{assumption}[Conditional Score Approximation]
\label{assump:cond}
For all $k=1,2,\ldots, N$, we have an upper bound for the TV-distance between the true conditional distribution $p_{kh}(\rvx_{kh}\mid \rvy)$ and our DPS-type approximation $\bar{p}_{kh}(\rvx_{kh} \mid \rvy)$ as follows:
\[\TV\left(\bar{p}_{kh}(\bx_{kh}\mid \rvy)~ \|~ p_{kh}(\bx_{kh}\mid \rvy)\right) \leq \varepsilon_{\mathrm{cond}}.\]
\end{assumption}

\begin{assumption}[Lipschitz Continuity, Strongly Convexity and Bounded Moment of Conditional Score]
\label{assump:others}
For all $t$, the conditional score $\nabla \log p_t(\bx_t\mid \rvy)$ is $L$-Lipschitz continuous, and the potential function $\log p_t(\bx_t\mid \rvy)$ is $m$-strongly convex, which enables us to obtain exponential convergence of Langevin MCMC algorithm. We also assume that
\[\mathbb E_{\bx_t\sim p_t(\cdot \mid \rvy)} \|\nabla \log p_t(\rvy\mid \bx_t)\|^2 \leq U_{\mathrm{cond}}^2.\]
For $t=0$, $p_0(\cdot \mid \rvy)$ has bounded second-order moment, i.e.
\[m_2^2 := \mathbb E_{\rvx_0\sim p_0(\cdot \mid \rvy)} \|\rvx_0\|^2 < \infty.\]
It also leads to the conclusion that $p_0(\rvx_0\mid \rvy)$ has a bounded KL-divergence from the standard Gaussian distribution, i.e.
\[\KL\left(p_0(\rvx_0\mid y) \| \mathcal N(0,\bm I_d)\right) \leq \poly(d).\]
\end{assumption}
Now, we state our main theorem as follows:
\begin{theorem}
\label{thm:main}
Under Assumptions \ref{assump:uncond}-\ref{assump:others}, once our time step $h < 1/L \wedge 1$, we can guarantee that our last-iterate conditional distribution $q_0(\cdot\mid \rvy)$ derived from DPMC is $(\varepsilon+\varepsilon_{\mathrm{cond}})$-close from ground truth with respect to TV-distance:
\[\TV(q_0(\rvx_0\mid \rvy), p_0(\rvx_0\mid \rvy)) \leq \varepsilon + \varepsilon_{\mathrm{cond}}\]
after choosing the step size $\eta$ and the number of inner loops (denoted by $K$) of the Langevin MCMC algorithm at all time steps as follows:
\[\eta = \frac{m\varepsilon^2}{32Ld^2}, ~~K=\frac{32L^2d\cdot \log((\sqrt{\poly(d)\cdot \exp(-T)} + \varepsilon_{\mathrm{cond}})/\varepsilon ~\vee~\varepsilon_{\mathrm{inter}} / \varepsilon)}{m^2\varepsilon^2}.\]
Here, 
\[\varepsilon_{\mathrm{inter}}:= \varepsilon + \varepsilon_{\mathrm{cond}} + C\sqrt{h}\cdot(L\sqrt{dh}+Lm_2 h) + C\sqrt{h}(\varepsilon_{\mathrm{score}} + U_{\mathrm{cond}}), \]
and $C$ is a universal constant. 
\end{theorem}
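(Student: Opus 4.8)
The plan is to decompose the total variation error into three pieces via the triangle inequality: (i) the error from running the reverse diffusion with an \emph{approximate} unconditional score to generate a proposal, (ii) the error from the DPS-type approximation $\bar p_{kh}(\cdot\mid\rvy)$ of the true conditional $p_{kh}(\cdot\mid\rvy)$, controlled by Assumption~\ref{assump:cond}, and (iii) the error incurred because the inner Langevin loop at each noise level runs for only finitely many steps $K$. The skeleton is an induction on the noise index $k$ from $N$ down to $0$: I would maintain an inductive hypothesis of the form $\TV(q_{kh}(\cdot\mid\rvy)\,\|\,p_{kh}(\cdot\mid\rvy)) \le \delta_k$ for a sequence $\delta_k$ to be determined, show that one proposal-plus-exploration round maps $\delta_{k}$ to $\delta_{k-1}$ with a controlled increment, and unroll the recursion. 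The base case at $k=N$ uses the last part of Assumption~\ref{assump:others}: since $p_0(\cdot\mid\rvy)$ — and by the forward process $p_{Nh}(\cdot\mid\rvy)$ — has bounded KL from $\mathcal N(0,\bm I_d)$, and since the forward OU process contracts at rate $\exp(-T)$, initializing at $\mathcal N(0,\bm I_d)$ costs only $\sqrt{\poly(d)\exp(-T)}$ in TV, which is exactly the first term inside the logarithm in the statement of $K$.

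For the inductive step I would argue as follows. First, the diffusion \textbf{proposal step} from level $kh$ to $(k-1)h$: using a standard discretization analysis of the reverse SDE/DDIM step (Girsanov plus the score-error Assumption~\ref{assump:uncond} and the Lipschitz/moment bounds of Assumption~\ref{assump:others}), the proposal distribution $\tilde q_{(k-1)h}$ is within $O(\sqrt h)(\sqrt{dh}L + L m_2 h + \varepsilon_{\mathrm{score}} + U_{\mathrm{cond}})$ in TV of the \emph{true} conditional $p_{(k-1)h}(\cdot\mid\rvy)$, starting from something $\delta_k$-close at level $kh$. Combining this with the previous-level error and the conditional-approximation error $\varepsilon_{\mathrm{cond}}$ yields precisely the quantity $\varepsilon_{\mathrm{inter}}$ defined in the theorem: it is the TV distance of the proposal from the target $\bar p_{(k-1)h}(\cdot\mid\rvy)$ that the Langevin chain is actually targeting. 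Second, the \textbf{exploration step}: apply Lemma~\ref{lemma:langevin} with initial distribution $p_0 = \tilde q_{(k-1)h}$ (this is why the remark after Lemma~\ref{lemma:langevin} stressing that the proof does not use $p_0=\mathcal N(0,\tfrac1m\bm I_d)$ matters), target $\bar p_{(k-1)h}(\cdot\mid\rvy)$, which is $m$-strongly-log-concave and $L$-smooth by Assumption~\ref{assump:others}. With $\TV(p_0\|\,\text{target})\le \varepsilon_{\mathrm{inter}}$ and $\eta = m\varepsilon^2/(32Ld^2)$, choosing $K = 32L^2 d\log(\varepsilon_{\mathrm{inter}}/\varepsilon)/(m^2\varepsilon^2)$ drives the post-Langevin distribution to within $\varepsilon$ of $\bar p_{(k-1)h}(\cdot\mid\rvy)$, hence within $\varepsilon+\varepsilon_{\mathrm{cond}}$ of the true $p_{(k-1)h}(\cdot\mid\rvy)$ by one more triangle step. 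This shows $\delta_{k-1}\le \varepsilon+\varepsilon_{\mathrm{cond}}$ uniformly — the recursion is a contraction to a fixed point rather than an accumulating sum, which is the crucial structural feature that keeps the final bound from blowing up with $N$.

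Finally I would take the maximum of the two logarithmic arguments — the base-case term $\sqrt{\poly(d)\exp(-T)}+\varepsilon_{\mathrm{cond}}$ and the per-step term $\varepsilon_{\mathrm{inter}}$ — to get the stated $K$, note that $h<1/L\wedge 1$ is what makes the discretization terms in $\varepsilon_{\mathrm{inter}}$ genuinely small (and makes the Lipschitz constant of the reverse drift $O(L)$), and conclude $\TV(q_0(\cdot\mid\rvy), p_0(\cdot\mid\rvy))\le \varepsilon+\varepsilon_{\mathrm{cond}}$.

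The main obstacle I anticipate is the diffusion proposal analysis: quantifying the TV error of one DDIM/reverse-SDE step on the \emph{conditional} process. DPS uses the approximation $p(\rvy\mid\rvx_t)\approx p(\rvy\mid\hat\rvx_0)$, so the drift of the conditional reverse process is only approximately the true conditional score, and one must track how the conditional-score approximation error (Assumption~\ref{assump:cond}, in TV) interacts with a Girsanov-type change of measure that more naturally wants an $L^2$ score-error bound. Converting between these, and handling the non-log-concavity that may appear at intermediate noise levels for the \emph{proposal} (as opposed to the target, where Assumption~\ref{assump:others} grants strong log-concavity), is the delicate part; I expect to route around it by only ever invoking strong log-concavity for the Langevin target $\bar p$ and absorbing everything else into the one-step discretization estimate $\varepsilon_{\mathrm{inter}}$, at the cost of the $C\sqrt h$ prefactors and the $U_{\mathrm{cond}}$ term appearing there. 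A secondary subtlety is that, strictly speaking, the algorithm applies MCMC only in a middle band of noise levels and runs plain DPS at the two ends; the clean statement above implicitly assumes MCMC throughout, and I would either prove that version or remark that the endpoints contribute only lower-order terms under the single-modality heuristic invoked in Section~\ref{sec:DPM}.
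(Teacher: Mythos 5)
Your proposal is correct and follows essentially the same route as the paper's proof: the same base case (forward-process contraction plus Pinsker and Assumption~\ref{assump:cond}), the same induction in which each round consists of a Girsanov/discretization bound for the proposal step (with the score error inflated by $U_{\mathrm{cond}}$ to account for conditioning) yielding exactly $\varepsilon_{\mathrm{inter}}$, followed by Lemma~\ref{lemma:langevin} to contract back to $\varepsilon$, so that the error is a fixed point rather than an accumulating sum. The only cosmetic difference is that you phrase the inductive hypothesis relative to the true conditional $p_{kh}(\cdot\mid\rvy)$ while the paper phrases it relative to $\bar p_{kh}(\cdot\mid\rvy)$ and converts via the triangle inequality at each step; these are equivalent, and your closing remark about the middle-band implementation is a fair caveat the paper itself leaves implicit.
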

\begin{proof}
Denote $q_t(\cdot)$ as the probability measure derived by the backward process of diffusion model. For the initial step of backward process, we have $q_T = \mathcal N(0,\bm I_d)$, whose distance from $p_T$ shows the convergence of forward process. According to \cite{chen2022sampling}, the variance-preserving framework leads to exponential convergence of forward process, i.e.
\[\KL\left(p_T(\rvx_T\mid y) \| \mathcal N(0,\bm I_d)\right) \leq \KL\left(p_0(\rvx_0\mid y) \| \mathcal N(0,\bm I_d)\right)\cdot \exp(-T) \leq \poly(d)\cdot \exp(-T). \]
By using Pinsker's Inequality and Assumption \ref{assump:cond}, we have:
\[\TV\left(\bar{p}_T(\rvx_T\mid y) \| \mathcal N(0,\bm I_d)\right)\leq \sqrt{\poly(d)\cdot \exp(-T)} + \varepsilon_{\mathrm{cond}}.\]
Starting from $\tilde{\rvx}_T\sim q_T=\mathcal N(0,\bm I_d)$, we apply Langevin MCMC algorithm with regard to the score function $\nabla \log p_T(\rvx_T\mid \rvy)$. By using Lemma \ref{lemma:langevin}, we can make $\TV(q_T(\rvx_T\mid \rvy), \bar{p}_T(\rvx_T\mid \rvy)) \leq \varepsilon$ by using step size $\eta = \frac{m\varepsilon^2}{32Ld^2}$, and the number of iterations 
\[K = \frac{32L^2d \log(\TV(q_T\| \bar{p}_T(\rvx_T\mid \rvy))/\varepsilon)}{m^2\varepsilon^2}.\]
Next, we apply unconditional backward step as well as Langevin MCMC inner loops to make sure $\TV(q_{kh}(\rvx_{kh}\mid \rvy), \bar{p}_{kh}(\rvx_{kh}\mid \rvy)) \leq \varepsilon$ holds for $\forall k\in [N]$, including the last iterate $k=0$. We prove it by the method of induction. Assume it holds that $\TV\left(q_{(k+1)h}(\rvx_{(k+1)h}\mid \rvy)~\|~ \bar{p}_{(k+1)h}(\rvx_{(k+1)h}\mid \rvy)\right) \leq \varepsilon$, then we immediately have
\[\TV\left(q_{(k+1)h}(\rvx_{(k+1)h}\mid \rvy)~\|~ p_{(k+1)h}(\rvx_{(k+1)h}\mid \rvy)\right) \leq \varepsilon + \varepsilon_{\mathrm{cond}}\]
by using Assumption \ref{assump:cond} as well as the triangular inequality of TV distance. After applying a backward diffusion step with unconditional score estimator $\hat{s}_{\theta}(\cdot, t)$ plugged in and obtain $\bar{\rvx}_{kh}$. During this backward step, the score estimation error is actually
\begin{align*}
&\bE_{\rvx_{kh}\sim p_{kh}(\rvx_{kh}\mid \rvy)} \|\hat{s}_{\theta}(\rvx_{kh})-\nabla \log p_{kh}(\rvx_{kh} \mid \rvy)\|^2 \\
&~~\leq \bE_{\rvx_{kh}}\|\hat{s}_{\theta}(\rvx_{kh})-\nabla \log p_{kh}(\rvx_{kh}) - \nabla \log p_{kh}(\rvy\mid \rvx_{kh})\|^2 \\
&~~\leq 2 \bE_{\rvx_{kh}} \|\hat{s}_{\theta}(\rvx_{kh})-\nabla \log p_{kh}(\rvx_{kh})\|^2 + 2\bE_{\rvx_{kh}} \|\nabla \log p_{kh}(\rvy\mid \rvx_{kh})\|^2\\
&~~\leq 2(\varepsilon_{\mathrm{score}}^2 + U_{\mathrm{cond}}^2) \leq 2(\varepsilon_{\mathrm{score}} + U_{\mathrm{cond}})^2.
\end{align*}
Therefore, we substitute the score estimation error $\varepsilon_{\mathrm{score}}$ in Theorem 2 of \cite{chen2022sampling} with $\varepsilon_{\mathrm{score}} + U_{\mathrm{cond}}$. 
We use the results in \cite{chen2022sampling} as well as Girsanov Theorem, and conclude that:
\begin{align*}
&\TV\left(\mathrm{Law}(\bar{\rvx}_{kh})~\|~p_{kh}(\rvx_{kh}\mid \rvy)\right) - \TV\left(q_{(k+1)h}(\rvx_{(k+1)h}\mid \rvy)~\|~ \bar{p}_{(k+1)h}(\rvx_{(k+1)h}\mid \rvy)\right)\\
&~~\leq \underbrace{C\sqrt{h}\cdot(L\sqrt{dh}+Lm_2 h)}_{\text{discretization error}} + \underbrace{C\sqrt{h}(\varepsilon_{\mathrm{score}} + U_{\mathrm{cond}})}_{\text{score estimation error}}
\end{align*}
where $C$ is a universal constant, which leads to
\[\TV\left(\mathrm{Law}(\bar{\rvx}_{kh})~\|~p_{kh}(\rvx_{kh}\mid \rvy)\right) \leq \varepsilon + \varepsilon_{\mathrm{cond}} + C\sqrt{h}\cdot(L\sqrt{dh}+Lm_2 h) + C\sqrt{h}(\varepsilon_{\mathrm{score}} + U_{\mathrm{cond}}) := \varepsilon_{\mathrm{inter}}. \]
As the initial step of Langevin MCMC inner loops at time $t=kh$, we apply the exponential convergence (Lemma \ref{lemma:langevin}) and show that we can make $\TV(q_{kh}(\rvx_{kh}\mid \rvy), \bar{p}_{kh}(\rvx_{kh}\mid \rvy)) \leq \varepsilon$ and complete the induction by letting the step size $\eta = \frac{m\varepsilon^2}{32Ld^2}$ and the number of iterations
\[K = \frac{32L^2d\cdot \log(\varepsilon_{\mathrm{inter}} / \varepsilon)}{m^2\varepsilon^2}.\]
To sum up, we can guarantee that $\TV(q_0(\rvx_0\mid \rvy), \bar{p}_0(\rvx_0\mid \rvy)) \leq \varepsilon$ by choosing the step size $\eta$ and the number of inner loop $K$ of Langevin MCMC algorithm as follows:
\[\eta = \frac{m\varepsilon^2}{32Ld^2}, ~~K=\frac{32L^2d\cdot \log((\sqrt{\poly(d)\cdot \exp(-T)} + \varepsilon_{\mathrm{cond}})/\varepsilon ~\vee~\varepsilon_{\mathrm{inter}} / \varepsilon)}{m^2\varepsilon^2}\]
where 
\[\varepsilon_{\mathrm{inter}}:= \varepsilon + \varepsilon_{\mathrm{cond}} + C\sqrt{h}\cdot(L\sqrt{dh}+Lm_2 h) + C\sqrt{h}(\varepsilon_{\mathrm{score}} + U_{\mathrm{cond}}). \]
It finally comes to our conclusion as $\TV(\bar{p}_0(\rvx_0\mid \rvy)~\|~p_0(\rvx_0\mid \rvy)) \leq \varepsilon+{\mathrm{cond}}$. 
\end{proof}

\section{More Experimental Results}
\label{sec:more_exp}
\subsection{Sampling time}
In Table \ref{tab:running time}, we report the time for generating one sample with DPMC default setting, which sets $T=200$, $K=4$ and applies MCMC sampling steps in the middle 60\% intermediate distributions. (See section \ref{sec:exp_setting} for more details.) The time is tested on the FFHQ dataset using a single Nvidia A100 GPU.

\begin{table}[h]
    \centering
    \caption{Running Times of Different Methods for Generating one Sample on FFHQ-1k Validation Dataset}
    \label{tab:running time}
    \begin{tabular}{cc}
        \toprule
        \textbf{Model} & \textbf{Running Time (Seconds)} \\
        \midrule
        DPMC(ours) & 54.63\\
        FPS \cite{dou2023diffusion} & 33.07 \\
        FPS-SMC (with $M=5$) \cite{dou2023diffusion} & 57.88\\
        DPS \cite{DPS_ChungKMKY23} & 70.42 \\
        Score-SDE \cite{Song20220011S0E22} & 32.93 \\
        DDRM \cite{kawar2022denoising} & 2.034 \\
        MCG \cite {ChungSRY22} & 73.16 \\
        PnP-ADMM \cite{ChanWE17} & 3.595 \\
        $\Pi$GDM \cite{SongVMK23} & 33.18 \\
        \bottomrule
    \end{tabular}
\end{table}

\subsection{More Samples for Each Inverse Problems}
In Figure \ref{fig:sp_more}, \ref{fig:gb}, \ref{fig:mb}, \ref{fig:inpainting_box}, \ref{fig:inpainting_random}, we show more samples for each inverse problem. DPMC works well across different tasks and datasets.

\begin{figure}[h]
    \centering
    \includegraphics[width=0.95\textwidth]{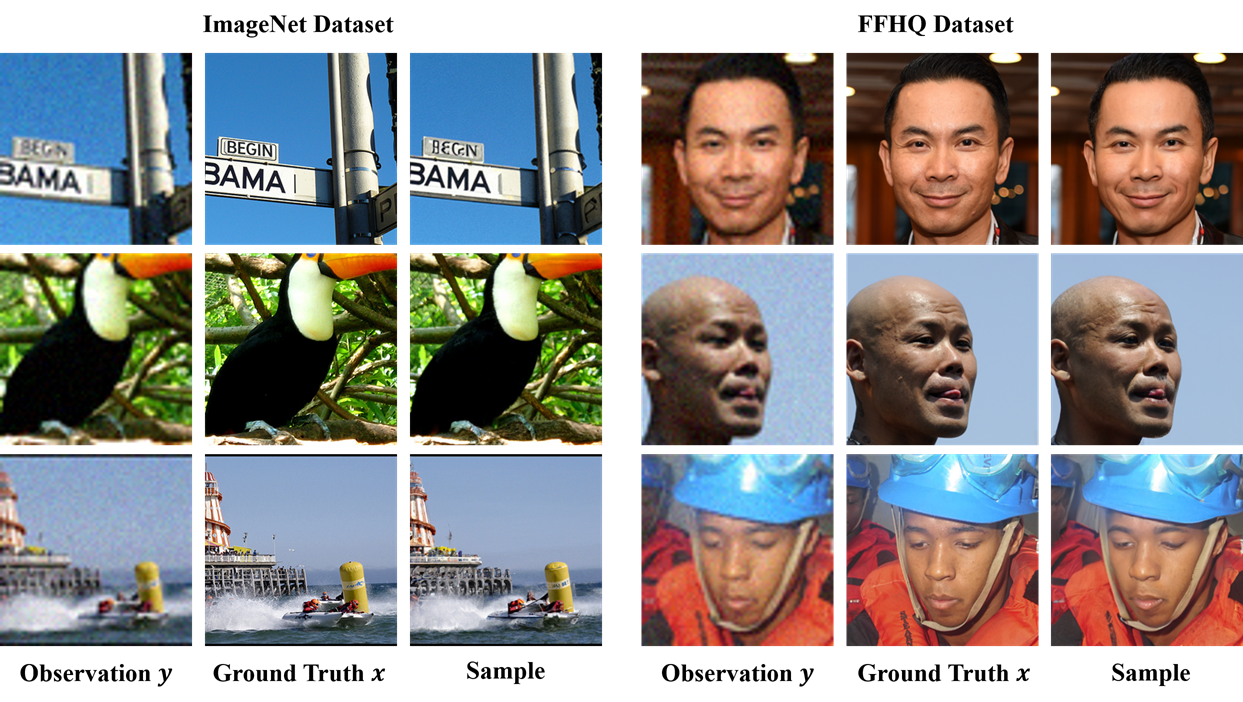}
    \caption{More Results for Super Resolution Task}
    \label{fig:sp_more}
\end{figure}

\begin{figure}[h]
    \centering
    \includegraphics[width=0.95\textwidth]{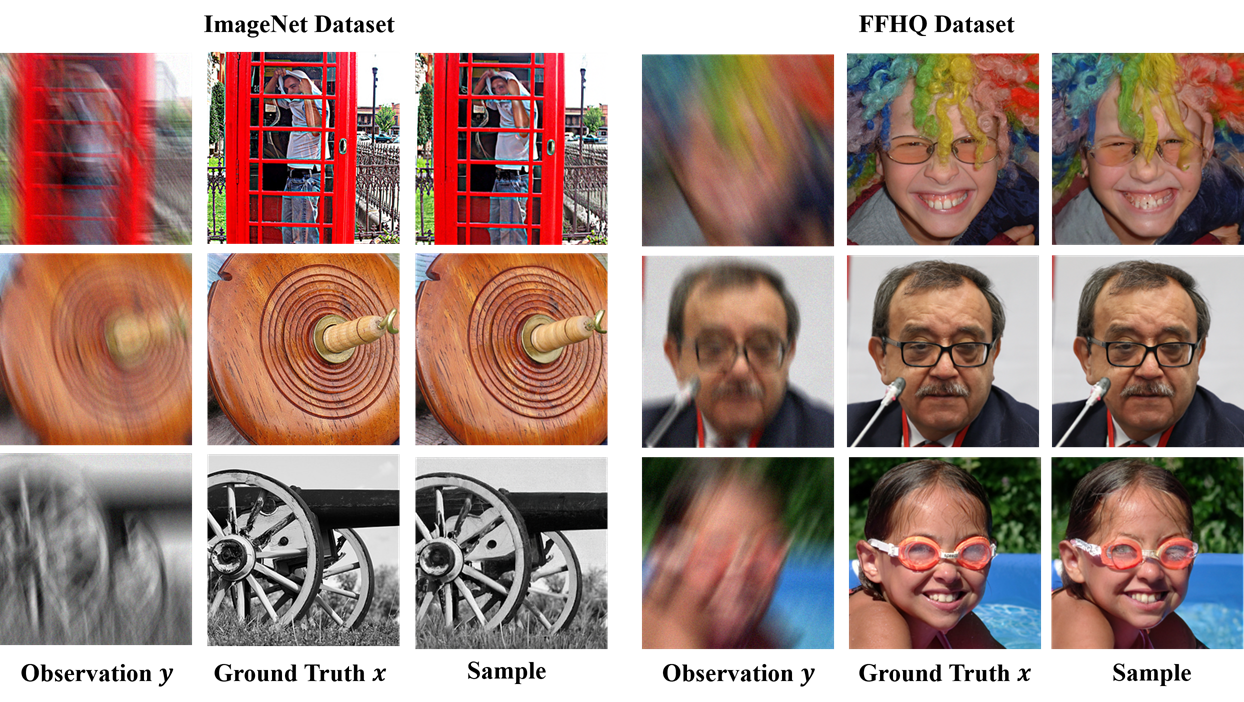}
    \caption{More Results for Motion Deblur Task}
    \label{fig:mb}
\end{figure}

\begin{figure}[h]
    \centering
    \includegraphics[width=0.95\textwidth]{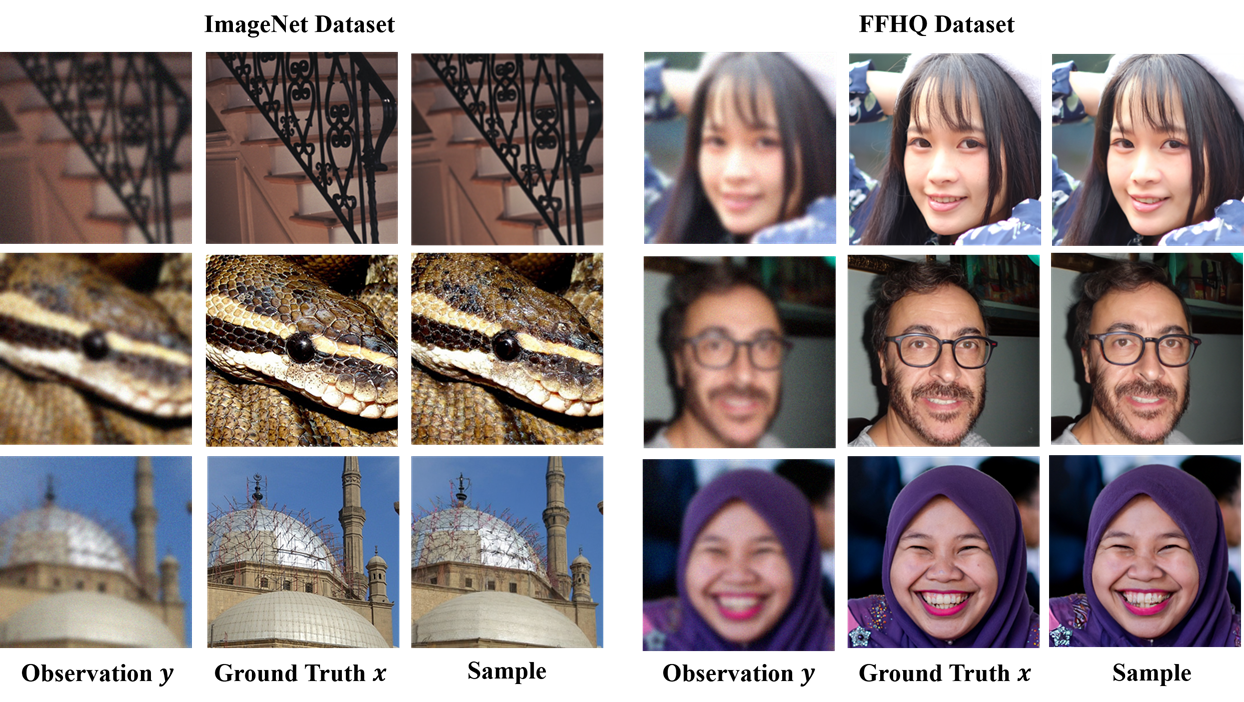}
    \caption{More Results for Gaussian Deblur Task}
    \label{fig:gb}
\end{figure}

\begin{figure}[h]
    \centering
    \includegraphics[width=0.95\textwidth]{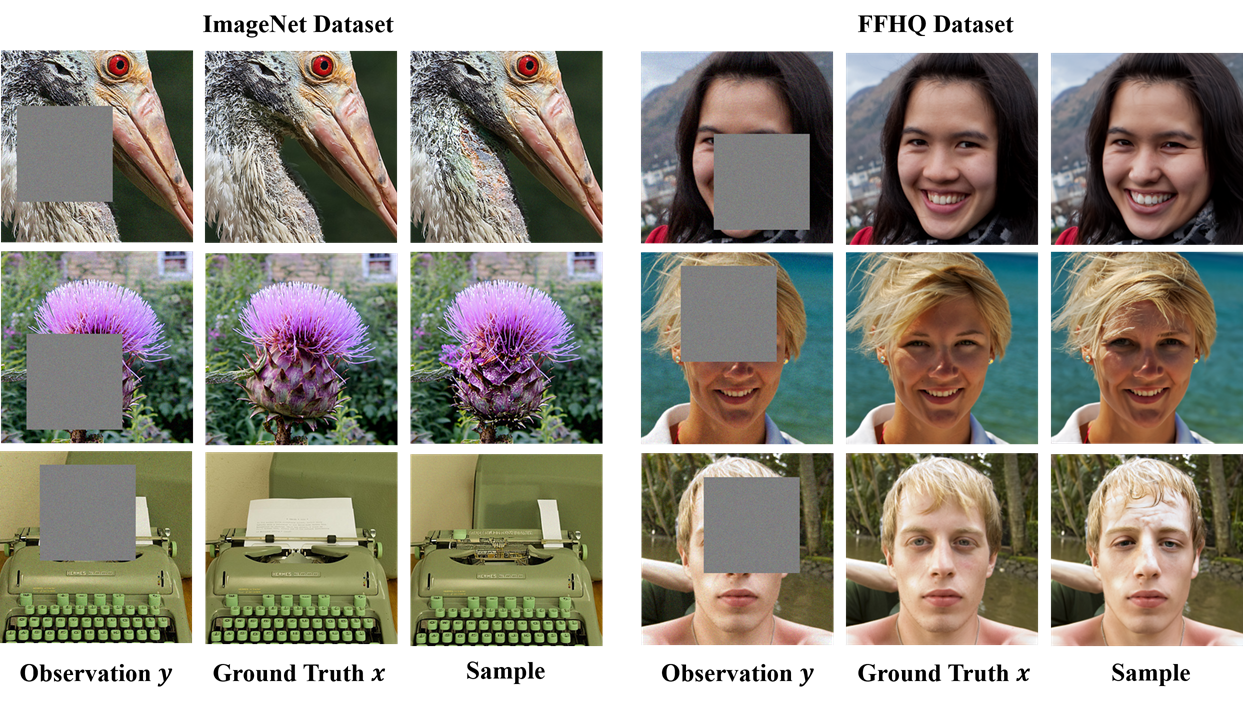}
    \caption{More Results for Inpainting (Box) Task}
    \label{fig:inpainting_box}
\end{figure}

\begin{figure}[h]
    \centering
    \includegraphics[width=0.95\textwidth]{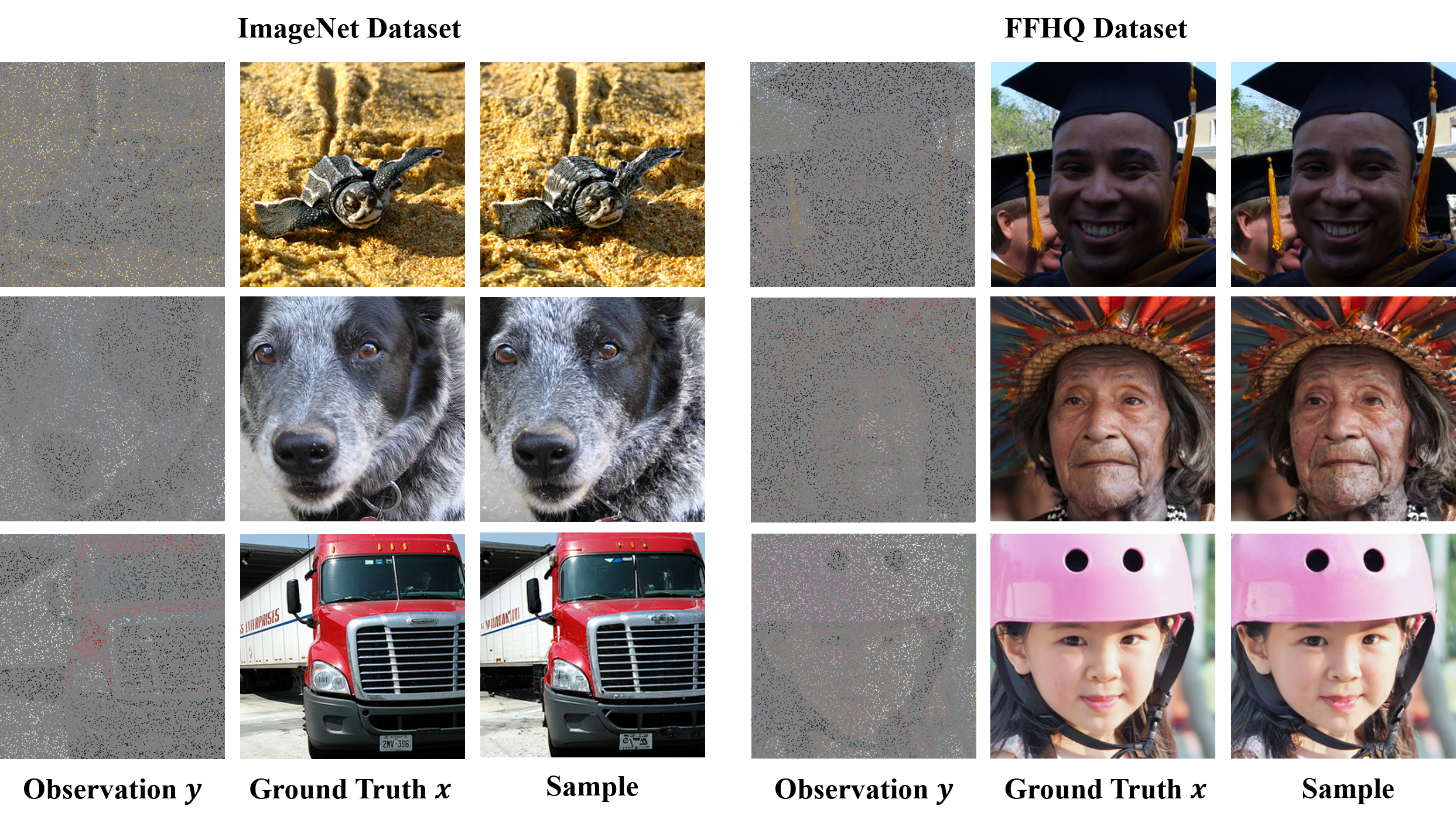}
    \caption{More Results for Inpainting (Random) Task}
    \label{fig:inpainting_random}
\end{figure}

\subsection{More baselines}
\label{sec:morebaselines}
In this section we compare DPMC with more strong baselines. More precisely, we compare our model with $\Pi$GDM \cite{SongVMK23}, DDNM\cite{DDNM_WangYZ23}, RED-Diff \cite{mardani2023variational}, DiffPir \cite{DiffPir_ZhuZLCWTG23} and CCDF \cite{CCDF_ChungSY22}. We conducted experiments for RED-diff, $\Pi$GDM and DDNM. And we obtained the reported results of DiffPir and CCDF from their original papers. The quantitative results are shown in Table \ref{tab:FFHQ_results_all} and Table \ref{tab:Imagenet_results_all}.

We have observed that the results of RED-diff and $\Pi$GDM are significantly influenced by whether noise is added to the degradation process. We observed that if noise is added, the optimized samples might exhibit noise-like artifacts and sometimes completely fail. Therefore, we include both the noisy version (similar to our setting) and the noiseless version (denoted as $\sigma = 0$ ) here. Note that due to setting differences, the noiseless results are listed here for reference only.

From the quantitative results, we can see that our DPMC demonstrates comparable or better performance compared to strong baselines in the noisy measurement case ($\sigma = 0.05$), particularly in terms of FID scores. Our results remain robust even when compared with the noiseless results of RED-diff and $\Pi$GDM. 

We also include qualitative comparision in Figure \ref{fig:quant_res_all}. The main advantage of our DPMC is that our samples contain more vivid details and can accept noisy settings ($\sigma > 0$). The samples from RED-diff and $\Pi$GDM might contain noise-like artifacts in noisy settings. The results of FPS-SMC, DDNM, $\Pi$GDM ($\sigma=0.0$) and RED-diff ($\sigma=0.0$) don't contain noise-like textures; however, these samples tend to be blurrier than ours. 

\begin{table}[ht]
\centering
\caption{Quantitative results of various linear inverse problems on FFHQ $256 \times 256$-1k validation set with more strong baselines.}
\noindent\resizebox{\textwidth}{!}{
\label{tab:FFHQ_results_all}
\begin{tabular}{cllllllllll}
\toprule
\textbf{Methods} & \multicolumn{2}{c}{\textbf{Super Resolution}} & \multicolumn{2}{c}{\textbf{Inpainting (box)}} & \multicolumn{2}{c}{\textbf{Gaussian Deblur}} & \multicolumn{2}{c}{\textbf{Inpainting (random)}} & \multicolumn{2}{c}{\textbf{Motion Deblur}} \\ \cmidrule(l){2-11} 
& \textbf{FID} & \textbf{LPIPS} & \textbf{FID} & \textbf{LPIPS} & \textbf{FID} & \textbf{LPIPS} & \textbf{FID} & \textbf{LPIPS} & \textbf{FID} & \textbf{LPIPS} \\
\midrule
DPMC (Ours) & 21.93 & 0.212 & 19.59 & 0.160 & 21.34 & 0.210 & 21.26 & 0.205 & 20.73 & 0.213 \\
\midrule
DDNM\cite{DDNM_WangYZ23} & 26.64 & 0.214 & 25.97 & 0.150 & 28.69 & 0.212 & 28.71 & 0.201 & - & - \\
RED-Diff \cite{mardani2023variational} & 89.13 & 0.435 & - & - & 37.35 & 0.255 & - & - & - & - \\
$\Pi$GDM \cite{SongVMK23} & 29.59 & 0.214 & - & - & 431.83 & 0.887 & - & - & - & - \\
CCDF \cite{CCDF_ChungSY22} & 60.90 & - & 49.77 & - & - & - & - & - & -\\
DiffPir \cite{DiffPir_ZhuZLCWTG23} & - & 0.260 & - & 0.236 & - & - & - & - & - & 0.255 \\
\midrule
RED-Diff $(\sigma = 0.0)$ \cite{mardani2023variational} & 39.68 & 0.185 & - & - & 30.54 & 0.161 & - & - & - & - \\
$\Pi$GDM  $(\sigma = 0.0)$ \cite{SongVMK23} & 39.61 & 0.207 & - & - & 34.52 & 0.140 & - & - & - & - \\
\bottomrule
\end{tabular}}
\end{table}

\begin{table}[ht]
\centering
\caption{Quantitative results of various linear inverse problems on ImageNet $256 \times 256$-1k validation set with more strong baselines.}
\noindent\resizebox{\textwidth}{!}{
\label{tab:Imagenet_results_all}
\begin{tabular}{clllllllllllll}
\toprule
 & \multicolumn{2}{c}{\textbf{Super Resolution}} & \multicolumn{2}{c}{\textbf{Inpainting (box)}} & \multicolumn{2}{c}{\textbf{Gaussian Deblur}} & \multicolumn{2}{c}{\textbf{Inpainting (random)}} & \multicolumn{2}{c}{\textbf{Motion Deblur}} \\ \cmidrule(l){2-11} 
\textbf{Methods} & \textbf{FID} & \textbf{LPIPS} & \textbf{FID} & \textbf{LPIPS} & \textbf{FID} & \textbf{LPIPS} & \textbf{FID} & \textbf{LPIPS} & \textbf{FID} & \textbf{LPIPS} \\ \midrule
DPMC (Ours) & 31.74 & 0.307 & 30.55 & 0.221 & 33.62 & 0.318 & 30.25 & 0.292 & 30.88 & 0.303\\
\midrule
RED-Diff \cite{mardani2023variational} & 82.62 & 0.471 & - & - & 39.11 & 0.319 & - & - & - & - \\
$\Pi$GDM \cite{SongVMK23} & 43.55 & 0.343 & - & - & 371.33 & 0.813 & - & - & - & - \\
DiffPir \cite{DiffPir_ZhuZLCWTG23} & - & 0.371 & - & 0.355 & - & - & - & - & - & 0.366 \\
\midrule
RED-Diff $(\sigma = 0.0)$ \cite{mardani2023variational} & 45.17 & 0.304 & - & - & 32.29 & 0.232 & - & - & - & - \\
$\Pi$GDM  $(\sigma = 0.0)$ \cite{SongVMK23} & 50.21 & 0.342 & - & - & 32.99 & 0.200 & - & - & - & - \\
\bottomrule
\end{tabular}}
\end{table}

\begin{figure}[ht]
    \begin{subfigure}[b]{0.98\textwidth}
        \centering
        \includegraphics[width=\textwidth]{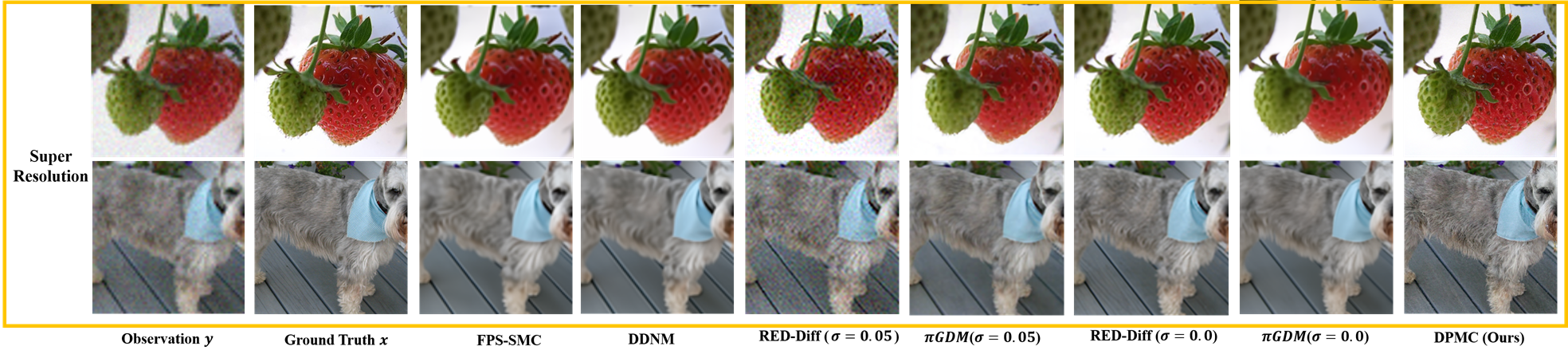}
        \label{fig:subfig1}
    \end{subfigure}
    
    \begin{subfigure}[b]{0.98\textwidth}
        \centering
        \includegraphics[width=\textwidth]{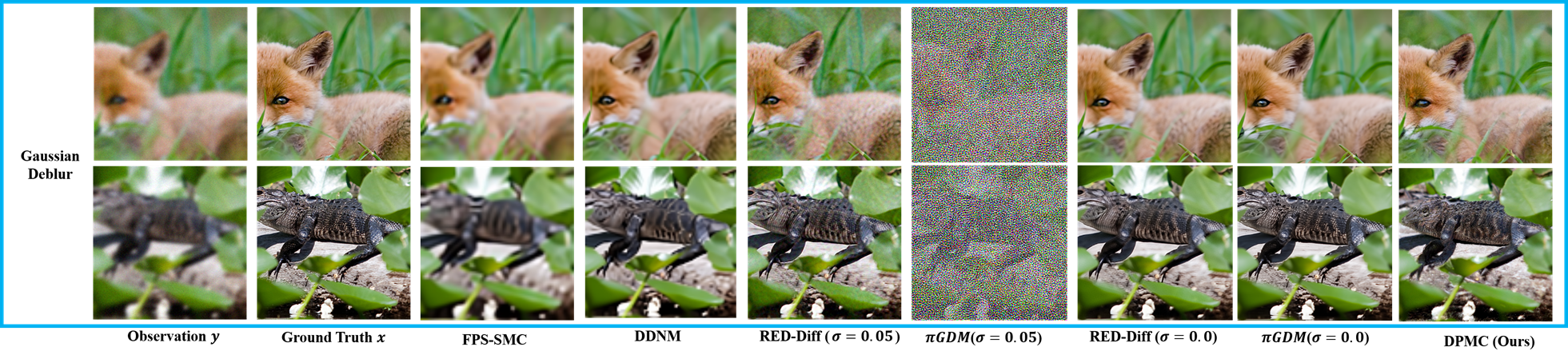}
        \label{fig:subfig3}
    \end{subfigure}
    \caption{Qualitative comparison of different methods on ImageNet $256 \times 256$-1k dataset.}
    \label{fig:quant_res_all}
\end{figure}

\end{document}